\titleformat*{\paragraph}{\bfseries\itshape}
\renewcommand{\epsilon}{\varepsilon}
\newcommand{\E}{\mathbb{E}} 
\newcommand{\R}{\mathbb{R}} 
\newcommand{\D}{\mathcal{D}}
\newcommand{\prob}{\delta}
\newcommand{\loss}{\ell}
\newcommand{\classfunc}{\phi}
\newcommand{\margin}{\gamma}
\newcommand{\classtail}{\mathcal{C}_{\classfunc,\beta}}
\newcommand{\normball}{B_{\epsilon}}
\crefname{claim}{Claim}{Claims}
\crefname{assumption}{Assumption}{Assumptions}
\newcommand{\wh}{\widehat}
\newcommand{\wt}{\widetilde}
\newcommand{\ifrac}{\delimpair{.}{/}{.}}
\title{Tight Risk Bounds for Gradient Descent on Separable Data}
\author{%
    Matan Schliserman\thanks{Blavatnik School of Computer Science, Tel Aviv University; \texttt{schliserman@mail.tau.ac.il}.}
    \and%
    Tomer Koren\thanks{Blavatnik School of Computer Science, Tel Aviv University, and Google Research; \texttt{tkoren@tauex.tau.ac.il}.}
}
\date{\today}
\begin{document}
\maketitle

\begin{abstract}
We study the generalization properties of unregularized
gradient methods applied to separable linear classification---a setting that has received considerable attention since the pioneering work of \citet{soudry2018implicit}.
We establish tight upper and lower (population) risk bounds for gradient descent in this setting, for any smooth loss function, expressed in terms of its tail decay rate.
Our bounds take the form 
$\Theta(r_{\ell,T}^2 / \gamma^2 T + r_{\ell,T}^2 / \gamma^2 n)$, 
where $T$ is the number of gradient steps, $n$ is size of the training set, $\gamma$ is the data margin, and $r_{\ell,T}$ is a complexity term that depends on the (tail decay rate) of the loss function (and on $T$).
Our upper bound matches the best known upper bounds due to \citet{shamir2021gradient,schliserman22a}, while extending their applicability to virtually any smooth loss function and relaxing technical assumptions they impose.
Our risk lower bounds are the first in this context and establish the tightness of our upper bounds for any given tail decay rate and in all parameter regimes.
The proof technique used to show these results is also markedly simpler compared to previous work, and is straightforward to extend to other gradient methods; we illustrate this by providing analogous results for Stochastic Gradient Descent.
\end{abstract}

\section{Introduction}
Recently, there has been a marked increase in interest regarding the generalization capabilities of unregularized gradient-based learning methods. 
One specific area of attention in this context has been the setting of linear classification with separable data, where a pioneering work by \citet{soudry2018implicit} showed that, when using plain gradient descent to minimize the empirical risk on a linearly separable training set with an exponentially-tailed classification loss (such as the logistic loss), the trained predictor will asymptotically converge in direction to the max-margin solution. 
As a result, standard margin-based generalization bounds for linear predictors suggest that, provided the number of gradient steps ($T$) is sufficiently large, the produced solution will not overfit, despite the lack of explicit regularization and the fact that its magnitude (i.e., Euclidean norm) increases indefinitely with $T$. This result has since been extended to incorporate other optimization algorithms and loss functions ~\citep{ji2018risk,ji2019refined,nacson2019convergence,nacson2019stochastic,ji2020regpath}.

Despite the high interest in this problem, the tight finite-time (population) risk performance of unregularized gradient methods, and even just of gradient descent, have not yet been fully understood.
The convergence to a high margin solution exhibited by \citet{soudry2018implicit} (for the logistic loss) occurs at a slow logarithmic rate, thus, the risk bounds for the trained predictors only become effective when $T$ is at least exponentially large in comparison to the size of the training set $n$ and the margin~$\gamma$. 
In a more recent work, \citet{shamir2021gradient} established risk bounds of the form $\wt{O}(\ifrac{1}{\gamma^2 T} + \ifrac{1}{\gamma^2 n})$ for several gradient methods in this setting (still with the logistic loss), that apply for smaller and more realistic values of $T$.
Later \citet{schliserman22a} gave a more general analysis 
that extends the bounds of \citet{shamir2021gradient} to a wide range of smooth loss functions satisfying a certain ``self-boundedness'' condition, and \citet{telgarsky22a} provided a test-loss analysis of stochastic mirror descent on ``quadratically-bounded'' losses.

However, all of these works either assume a specific loss (e.g., the logistic loss), impose various conditions on the loss function (beyond smoothness), or do not establish the tightness of their bounds in all regimes of parameters ($T, n$ and $\gamma$), and especially in the regime where the number of gradient steps $T$ is larger than the sample size $n$.

\subsection{Our contributions}
\label{contributions}

In this work, we close these gaps
by showing nearly matching upper and lower risk bounds for gradient descent, which hold essentially for \emph{any} smooth loss function, and for any number of steps $T$ and sample size $n$. 
Compared to recent prior work in this context \citep{shamir2021gradient,telgarsky22a,schliserman22a}, our results do not require any additional assumptions on the loss function besides its smoothness, and they strictly improve upon the existing bounds by their dependence on $T$. 
Further, to the best of our knowledge, risk lower bounds were not previously explored in this context, and the lower bounds we give establish, for the first time, the precise risk convergence rate of gradient descent for any smooth loss function and in all regimes of $T$ and $n$.

In some more detail, our results assume the following form. Let $\classtail$ be the class of nonnegative, convex and $\beta$-smooth loss functions $\ell(u)$ that decay to zero (as $u \to \infty$) faster than a reference ``tail function'' $\classfunc:[0,\infty)\to \R$. The function $\classfunc$ is merely used to quantify how rapidly the tails of loss functions in $\classtail$ decay to zero.
Then, the risk upper and lower bounds we prove for gradient descent are of the following form:%
\footnote{For simplicity, we specialize the bounds here to gradient descent with stepsize $\eta = \Theta(1/\beta)$, but the bounds hold more generally to any stepsize smaller than $O(1/\beta)$.}

\begin{align} \label{eq:bound-intro}
    \frac{\beta r_{\classfunc,T}^2}{\gamma^2 T}+\frac{\beta r_{\classfunc,T}^2}{\gamma^2 n}
    .
\end{align}
The bounds depend on the tail function $\classfunc$ through the term $r_{\classfunc,T}$ that, roughly, equals $\classfunc^{-1}(\epsilon)$ for $\epsilon$ chosen 
such that $\epsilon \approx \ifrac!{(\classfunc^{-1}(\epsilon))^2}{\gamma^2 T}$ %
(for the precise bounds refer to \cref{upper,lower}, and for concrete examples of implied bounds, see \cref{table:bounds}).
\begin{table}[t]
\small
\begin{center}
\begin{tabular}{lccccc} 
\toprule

 {\bfseries Tail decay rate}
& {\bfseries Risk bounds}

\\[0.5ex] 
\toprule
 $\exp(-x)$
&
$\displaystyle\Theta\left( \frac{\log^2(T)}{\margin^2T}+
     \frac{ \log^2(T)}{\margin^2n}\right)$
\\ 
 $x^{-\alpha}$
& $\displaystyle\Theta\brk2{\brk!{\frac{1}{\margin}}^{\frac{2\alpha}{2+\alpha}}
    \brk2{\frac{1}{T^{\frac{\alpha}{2+\alpha}}}+\frac{T^{\frac{2}{2+\alpha}}}{n}}}$  
\\
$\exp(-x^{\alpha})$ 
& 
$\displaystyle\Theta\left(\frac{ \log^{\frac{2}{\alpha}}(T)}{\margin^2 T}+\frac{\log^{\frac{2}{\alpha}}(T)}{\margin^2 n}\right)$
\\
\bottomrule
\end{tabular}
\end{center}
\caption{Examples of risk bounds established in this paper for gradient descent on $\gamma$-separable data, instantiated for several different loss tail decay rates.
Here, $T$ is the number of gradient steps and $n$ is the size of the training set.}
\label{table:bounds}
\end{table}

The form of the bound in \cref{eq:bound-intro} resembles the bounds given in recent work by \citet{schliserman22a}. However, they imposed an additional ``self-boundedness'' assumption that we do not require (for our upper bound), and they did not establish the tightness of their bounds, as we do in this paper by providing matching lower bounds.
On the flip side, their upper bounds apply in a broader stochastic convex optimization setup, whereas our bounds are specialized to generalized linear models for classification.
We also note that, for the specific case of an exponentially-tailed loss function, our rates match the upper bounds of \cite{shamir2021gradient} up to logarithmic factors.

In terms of upper bounds, our proof methodology is also distinctly (and perhaps somewhat surprisingly) simple compared to previous work.
We rely on two elementary properties that gradient methods admit when applied to a smooth and realizable objective: low training loss and low norm of the produced solutions. 
Both properties are obtained from fairly standard arguments and convergence bounds for smooth gradient descent, when paired with conditions implied by the decay rate of the loss function.
Finally, to bound the gap between the population risk and the empirical risk, we employ a classical result of \citet{srebrosmooth} that bounds the generalization gap of linear models in the so-called ``low-noise'' (i.e., nearly realizable) smooth regime using local Rademacher complexities.
Somewhat surprisingly, this simple combination of tools already give sharp risk results that improve upon the state-of-the-art~\citep{shamir2021gradient,schliserman22a,telgarsky22a} both in terms of tightness of the bounds and the light set of assumptions they rely on. 

We remark here that the proof scheme summarized above can be generalized to essentially any gradient method for which one can prove simultaneously bounds on the optimization error and the norm of possible solutions produced by the algorithm. 
In the sequel, we focus for concreteness on standard gradient descent, but in \cref{app_SGD} we also give bounds for Stochastic Gradient Descent (SGD), which is shown to admit both of these properties with high probability over the input sample.

For the lower bounds, our constructions are inspired by \citet{shamir2021gradient} who established a lower bound of $\Omega(\ifrac{1}{\gamma^2T})$ for the empirical risk of gradient descent with the logistic loss, which is tight up to logarithmic factors. 
When adapting their technique to other loss functions, e.g., with a polynomially decaying tail, these logarithmic factors become polynomial factors (in $T$) and the bound becomes no longer tight, even in the regime $T \ll n$ where the $1/T$ term in the bound is dominant. 
In contrast, using a careful adaptation of the lower bound construction, we establish nearly tight bounds for virtually any tail decay rate, and in all regimes of $T$ and $n$.

An interesting conclusion from our bounds pertains to the significance of early stopping.
We see that gradient descent, even when applied on a smooth loss functions that decay rapidly to zero, might overfit (i.e., reach a trivial $\Theta(1)$ risk) with respect the the surrogate loss function as the number of steps $T$ grows. 
The time by which gradient descent starts overfitting depends on the tail decay rate of the loss function: the slower the decay rate, the shorter the time it takes to overfit.
It is interesting to note that a similar phenomenon may not occur for the zero-one loss of the trained predictor. For example, \citet{soudry2018implicit} show that with the logistic loss, gradient descent does not overfit in terms of the zero-one as $T$ approached infinity---whereas, at the same time, it \emph{does} overfit as $T \to \infty$ in terms of the logistic loss itself, as seen from our lower bounds.

Another interesting aspect is the effect of a Lipschitz condition of the loss function on the achievable risk bounds.
Curiously, the upper bounds we are able to establish do not improve if we impose an additional Lipschitz assumption. An interesting direction for future work is to investigate whether this is a deficiency of our proof technique, or whether Lipschitzness of the loss function could be used to further improve the risk bounds of gradient methods.

\paragraph{Summary of contributions.}

To summarize, the main contribution of this paper are as follows: 
\begin{itemize}
    \item Our first main result (in \cref{sec:upper}) is a high probability risk upper bound for gradient descent in the setting of separable classification with a convex and smooth loss function. Our bound matches the best known upper bounds~\citep{shamir2021gradient,schliserman22a} and greatly extend them to allow for virtually any convex and smooth loss function, considerably relaxing various assumptions imposed by prior work.
    \item Our second main result (in \cref{sec:lower}) is a nearly matching lower bound for the risk of gradient descent, establishing the tightness of our analysis given the smoothness and tail-decay conditions. The tightness of our bounds holds across tail decay rates and different regimes of the parameters $T,n$ and $\gamma$.
    \item We also provide analogous results for Stochastic Gradient Descent with replacement (in \cref{app_SGD}), mainly to emphasize that that our analysis uses only two elementary properties of the optimization algorithm: low optimization error and low norm of the produced solution. The same analysis can be generalized to any gradient method that admits these two properties.
\end{itemize}

\subsection{Additional related work}

\paragraph{Unregularized gradient methods on separable data.}

The most relevant work to ours is of \citet{schliserman22a}, who used algorithmic stability and two simple conditions of self-boundedness and realizability which the loss functions hold, to get generalization bounds for gradient methods with constant step size in the general setting of stochastic convex and smooth optimization. Then, they derived risk bounds which hold in expectation for the setting of linear classification with separable data for every loss function which decays to $0$. The exact bound was depend in the rate of decaying to $0$ of the function. For the Lipschitz case their risk bound with respect to the loss function $\ell$ is 
$
    O( \ifrac!{\ell^{-1}(\epsilon)^2}{\gamma^2 T} + \ifrac!{\ell^{-1}(\epsilon)^2}{\gamma^2 n} )
$ 
for any choice of $\epsilon$ such that $\ifrac{\epsilon}{\ell^{-1}(\epsilon)^2}\leq \ifrac{1}{\gamma^2 T}$. For example, for the logistic loss, the bound translates to
$
    O( \ifrac!{\log^2(T)}{\gamma^2 T} + \ifrac!{\log^2(T)}{\gamma^2 n} )
    .
$ 

In another work, \citet{telgarsky22a}, showed a high probability risk bound for $T\leq n$ for Batch Mirror Decent with step size $\eta \simeq \ifrac{1}{\sqrt{T}}$ in linear models, using a reference vector, which when selected properly, can be translated to a risk bound of $O( \ifrac!{\ell^{-1}(\epsilon)^2}{\gamma^2 \sqrt{T}} )$ for gradient descent applied on the loss function $\ell$,
and to a $O( \ifrac!{\log^2 T }{\gamma^2 \sqrt{T}} )$ for the logistic loss. 

\paragraph{Fast rates for smooth and realizable optimization.}

The problem of smooth and realizable optimization, also known as the ``low-noise'' regime of stochastic optimization, is a very well researched problem. \citet{srebrosmooth} showed that stochastic gradient descent achieved risk bound of $O(\ifrac{1}{n})$ in this setting. 
For linear models, they also showed that ERM achieve similar fast rates by using local Rademacher complexities.
Later \cite{needle} showed that SGD converges linearly when the loss function is also strongly convex.
In more recent works, \cite{lei2020fine} used stability arguments to show that SGD with replacement with $T=n$ achieve risk of $O(\ifrac{1}{n})$.

\paragraph{Lower bounds.}

A lower bound related to ours appears in \citet{ji2019refined}.
In this work, the authors showed a lower bound of $\|w'_t-w^*\|\geq \ifrac{\log(n)}{\log(T)}$. In our work, however, we get lower bound directly for the loss itself and not for this objective.
More recently, \citet{shamir2021gradient} showed a lower bound of $\Omega(\ifrac{1}{\gamma^2T})$ for the empirical risk of GD when applied on the logistic loss which is tight up to log factors. When generalizing this technique for other objectives, e.g., functions that decay polynomially to zero, the log factors become polynomial factors and the bound becomes not tight, even in the regime $T \ll n$ where the $1/T$ term in the bounds is dominant. 
In contrast, we establish nearly tight bounds for virtually any tail decay rate, and in all regimes of $T$ and $n$. 

\section{Problem Setup}
\label{sec:setup}

We consider the following typical linear classification setting. 
Let $\D$ be distribution over pairs $(x,y)$, where $x \in \R^d$ is a $d$-dimensional feature vector and $y \in \R$ is  a real number that represents the corresponding label.
We focus on the setting of \emph{separable}, or \emph{realizable}, linear classification with margin.
Formally, we make the following assumption.
 
\begin{assumption}[realizability]
There exists a unit vector $w^* \in \R^d$ and $\gamma > 0$ such that $y (w^* \cdot x) \geq \margin$ almost surely with respect to the distribution $\D$.
\end{assumption}

Equivalently, we will identify each pair $(x,y)$ with the vector $z = yx$, and realizability implies that $w^* \cdot z \geq \margin$ with probability $1$.
We assume that data is scaled so that $\norm{z} \leq 1$ with probability $1$.

Given a nonnegative loss function $\loss:\R\to\R^+$, the objective is to determine a model $w \in \R^d$ that minimizes the (population) risk, defined as the expected value of the loss function over the distribution $\mathcal{D}$, namely
\begin{align} \label{eq:F_def}
    L(w) = \E_{z \sim \D}[\ell(w \cdot z)].
\end{align}
For finding such a model, we use a set of training examples $S=\{z_1,...,z_n\}$ which drawn i.i.d.\ from $\D$ and an empirical proxy, the \textit{empirical risk}, which is defined as
\begin{equation}
\label{F_hat_def}
    \wh{L}(w)
    =
    \frac1n \sum_{i=1}^n \ell(w\cdot z_i)
    .
\end{equation}

\subsection{Loss functions}

The loss functions $\ell$ considered in this paper are nonnegative, convex and $\beta$-smooth.%
\footnote{A function $\loss : \R \to \R$ is said to be $\beta$-smooth over $\R$ if $\loss(v) \leq \loss(u) + \loss'(u)\cdot (v-u)+\tfrac{1}{2}\beta(v-u)^2$ for all $u,v \in \R$.}
We also require that $\loss$ is strictly monotonically decreasing and $\lim_{u \to \infty} \loss(u) = 0$.
The vast majority of loss functions used in supervised learning for classification satisfy these conditions; these include, for example, the logistic loss ($\ell(u)=\log(1+e^{-u})$), the probit loss ($\loss(u)=-\log(\tfrac12 -\tfrac12\operatorname{erf}(u))$), and the squared hinge loss ($\ell(u)=(\max\{1-u,0\})^2$).   

A main goal of this paper is to quantify how do the achievable bounds on the risk depend on properties of the loss function $\ell$ used, and most crucially on the rate in which $\ell$ decays to zero as its argument approaches infinity.
To formalize this, we need a couple of definitions.

\begin{definition}[tail function]
    We say that $\classfunc:[0,\infty) \to \R$ is a \textit{tail function} if $\classfunc$
    \begin{enumerate}[nosep,label=(\roman*)]
        \item is a nonnegative, $1$-Lipschitz and $\beta$-smooth convex function;
        \item is strictly monotonically decreasing such that $\lim_{u \to \infty} \classfunc(u) = 0$;
        \item satisfies $\classfunc(0) \geq \frac{1}{2}$ and $|\classfunc'(0)|\geq\frac{1}{2}$. 
    \end{enumerate}
\end{definition}

Every tail function $\classfunc$ defines a class of loss functions characterized by the rate $\classfunc$ decays to zero. 

\begin{definition}[$\classfunc$-tailed class]
For a tail function $\classfunc$, the class $\classtail$ is the set of all nonnegative, convex, $\beta$-smooth and monotonically decreasing loss functions $\loss : \R \to \R^+$ such that $\loss(u)\leq \classfunc(u)$ for all $u\geq 0$.
\end{definition}

We detail several examples for tail functions in \cref{table:bounds}.

\subsection{Gradient Descent}

The algorithm that we focus in this paper is standard gradient descent (GD) with a fixed step size $\eta>0$ applied to the empirical risk $\widehat{L}$; this method is initialized at $w_1=0$ and at each step $t=1,\ldots,T$ performs an update
\begin{equation} \label{gd_update_rule}  
    w_{t+1} 
    = 
    w_t - \eta \nabla \widehat{L}(w_t)
    .
\end{equation}
The algorithm returns the final model, $w_T$.

We remark however that most of the results we present in the sequel can be straightforwardly adapted to other gradient methods; we include results for Stochastic Gradient Descent (SGD) in \cref{app_SGD}, and the same proof techniques can be used to analyze multi-epoch and/or mini-batched SGD, gradient flow, and more.

\section{Risk Upper Bounds}
\label{sec:upper}

We begin by giving a general upper bound for the risk of gradient descent, when the loss function~$\ell$ is taken from the class~$\classtail$.
Our main result in this section is the following.

\begin{theorem}
\label{upper}
Let $\classfunc$ be a tail function and let $\loss$ be any loss function from the class $\classtail$. 
Fix $T$,$n$ and $\delta>0$.
Then, with probability at least $1-\delta$ (over the random sample $S$ of size $n$), the output of GD applied on $\wh L$ with step size $\eta \leq\ifrac{1}{2\beta}$ initialized at $w_1=0$ has
\begin{align*}
    L(w_T)
    \leq 
    \frac{4K(\classfunc^{-1}(\epsilon))^2}{\gamma^2\eta T}
        +\frac{32K\beta (\classfunc^{-1}(\epsilon))^2 \brk!{\log ^{3}n + 4\log\frac{1}{\delta}}}{\gamma^2 n}
        +\frac{4K (\classfunc^{-1}(\epsilon))^2 \log\frac{1}{\delta}}{\gamma^2\eta T n}
\end{align*}
for any $\epsilon \leq \tfrac{1}{2}$ such that $\eta\gamma^2T\leq \ifrac{(\classfunc^{-1}(\epsilon))^2 }{\epsilon}$,
where $K < 10^5$ is a numeric constant.
\end{theorem}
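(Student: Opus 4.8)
The plan is to combine three ingredients: a bound on the empirical risk $\wh L(w_T)$ (the optimization error), a bound on the norm $\norm{w_T}$ of the returned iterate, and a generalization bound for smooth nonnegative losses over a bounded-norm linear class. The first step is to fix a reference predictor $\wt w = (\classfunc^{-1}(\epsilon)/\margin)\, w^*$. Since $w^*$ is a unit vector, $\norm{\wt w} = \classfunc^{-1}(\epsilon)/\margin$, and since $w^*\cdot z \geq \margin$ almost surely while $\loss$ is decreasing with $\loss \leq \classfunc$, every term of the empirical risk obeys $\loss(\wt w \cdot z_i) \leq \loss(\classfunc^{-1}(\epsilon)) \leq \classfunc(\classfunc^{-1}(\epsilon)) = \epsilon$; hence $\wh L(\wt w) \leq \epsilon$. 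This reference point is the only place where the tail condition $\loss\in\classtail$ enters.

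For the optimization error I would invoke the standard convergence guarantee for gradient descent on a $\beta$-smooth convex objective with stepsize $\eta \leq 1/\beta$, which (using $w_1=0$ and monotonicity of the function values) gives $\wh L(w_T) - \wh L(\wt w) \leq \norm{\wt w}^2 / (2\eta T)$. Combined with $\wh L(\wt w) \leq \epsilon$ and the hypothesis $\eta\margin^2 T \leq (\classfunc^{-1}(\epsilon))^2/\epsilon$ (equivalently $\epsilon \leq (\classfunc^{-1}(\epsilon))^2/(\margin^2\eta T)$), this yields $\wh L(w_T) = O((\classfunc^{-1}(\epsilon))^2/(\margin^2\eta T))$, which is the first term. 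For the norm, I would use the self-bounding inequality for nonnegative $\beta$-smooth functions, $\norm{\nabla\loss}^2 \leq 2\beta\loss$, inside the one-step expansion of $\norm{w_{t+1}-\wt w}^2$; with $\eta \leq 1/(2\beta)$ the smoothness term is absorbed, leaving $\norm{w_{t+1}-\wt w}^2 \leq \norm{w_t - \wt w}^2 - \eta\,\wh L(w_t) + 2\eta\,\wh L(\wt w)$. Telescoping and discarding the nonpositive sum, then using $\wh L(\wt w)\leq\epsilon$ and the same hypothesis, gives $\norm{w_T - \wt w}^2 = O((\classfunc^{-1}(\epsilon))^2/\margin^2)$, so by the triangle inequality $\norm{w_T} = O(\classfunc^{-1}(\epsilon)/\margin)$.

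The last step is to control the generalization gap $L(w_T) - \wh L(w_T)$. Here I would appeal to the local Rademacher complexity bound of \citet{srebrosmooth} for nonnegative $\beta$-smooth losses composed with the linear class $\{z \mapsto w\cdot z : \norm{w}\leq B\}$ over data with $\norm{z}\leq 1$, taking $B = O(\classfunc^{-1}(\epsilon)/\margin)$ from the previous step. This gives, with probability at least $1-\delta$, a bound of the form $L(w_T) \leq \wh L(w_T) + O(\sqrt{\beta B^2\log^3 n/n}\cdot\sqrt{\wh L(w_T)} + \beta B^2\log^3 n/n + (\text{lower-order }\delta\text{-terms}))$. Substituting $\wh L(w_T) = O((\classfunc^{-1}(\epsilon))^2/(\margin^2\eta T))$ and $B^2 = O((\classfunc^{-1}(\epsilon))^2/\margin^2)$ and applying AM--GM to the cross term (splitting $\sqrt{\beta\log^3 n/(n\eta T)}$ into $\tfrac12(1/(\eta T) + \beta\log^3 n/n)$) collapses everything into the three terms of the statement: the optimization term proportional to $(\classfunc^{-1}(\epsilon))^2/(\margin^2\eta T)$, the complexity term proportional to $\beta(\classfunc^{-1}(\epsilon))^2(\log^3 n + \log(1/\delta))/(\margin^2 n)$, and the lower-order cross term proportional to $(\classfunc^{-1}(\epsilon))^2\log(1/\delta)/(\margin^2\eta T n)$.

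I expect the main obstacle to be this third step: correctly instantiating the smooth-loss bound of \citet{srebrosmooth}, since its high-probability form is what carries the $\log^3 n$ factor and the $\log(1/\delta)$ dependence that determine the second and third terms, and one must verify that the composed loss class has the required smoothness and boundedness for the bound to apply with $B$ as above. Pinning down these factors and the numeric constant $K$ is the delicate part, whereas the optimization and norm-control arguments of the first two steps follow from standard smooth gradient descent analysis and elementary algebra.
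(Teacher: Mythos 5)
Your proposal is correct and follows essentially the same route as the paper's proof: the same reference point $w^*_\epsilon = (\classfunc^{-1}(\epsilon)/\margin)w^*$ with $\wh L(w^*_\epsilon)\leq\epsilon$, the same self-bounding/telescoping argument for the norm and optimization error (the paper's \cref{norm_gd,opt_error}), the same instantiation of the \citet{srebrosmooth} bound over the norm ball with $R_n = O(r_\epsilon/\sqrt{n})$, and the same AM--GM collapsing of the cross term. The "delicate" boundedness issue you flag is handled in the paper exactly as you anticipate, via the smoothness inequality $f(x)\leq 2f(y)+\beta\norm{x-y}^2$ applied at $w^*_\epsilon$, giving $b \leq 2\epsilon + 4\beta r_\epsilon^2$.
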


Note that the expression $\ifrac{(\classfunc^{-1}(\epsilon))^2}{\epsilon}$ increases indefinitely as $\epsilon$ approaches 0; therefore, for any $T$, there exists an $\epsilon$ that satisfies the theorem's condition. For examples of how this bound is instantiated for different tail decay functions $\classfunc$, refer to \cref{table:bounds}.

In the remainder of this section we prove~\cref{upper}. The structure of the the proof will be as follows:
First, we bound the norm of the GD solution; by smoothness and realizability, we get that the norm will remain small compared to a reference point with small loss value. Second, we get a bound on the optimization error in this setting, the relies on the same reference point.
Finally, we use a fundamental result due to \citet{srebrosmooth} (reviewed in the subsection below) together with both bounds to derive the risk guarantee.
As discussed broadly in the introduction, this proof scheme can be generalized to other gradient methods which satisfy the properties of model with low norm and low optimization error.

\subsection{Preliminaries: Uniform Convergence Using Rademacher Complexity}

One property of linear models is that in this class of problems is that we have dimension-independent and algorithm-independent uniform convergence bounds, that enables to bound the difference between the empirical risk and the population risk of a specific model. A main technical tool for bounding this difference is the Rademacher Complexity~\citep{bartlett2002rademacher}.
The worst-case Rademacher complexity of an hypothesis class $H$ for any sample size $n$ is given by:
\begin{align*}
    R_n(H)=\sup_{z_1,...z_n}\E_{\sigma\sim\mathrm{Unif}\left(\{\pm1\}^n\right)}\left[~ \sup_{h\in H}\frac{1}{n}\left|\sum_{i=1}^n h(z_i)\sigma_i\right| ~\right]
    .
\end{align*}
We are interested in models that achieve low empirical risk on smooth objectives. A fundamental result of \citet{srebrosmooth} bounds the generalization gap under such conditions:
\begin{proposition}[\citealp{srebrosmooth}, Theorem 1]
\label{general_rademacher}
Let $H$ be a hypothesis class with respect to some non negative and $\beta$-smooth function, $\loss(t \cdot y)$, such that for every $w\in H,x,y$, $|\loss(wx\cdot y)|\leq b$. 
Then, for any $\delta>0$ we have, with probability at least $1-\delta$ over a sample of size $n$, uniformly for all $h\in H$,
\begin{align*}
    L(h)
    \leq
    \wh{L}(h) + K\left(\sqrt{\smash[b]{\wh{L}(h)}}\left(\sqrt{\beta}\log ^{1.5}(n) R_n(H)+\sqrt{\frac{b\log\frac{1}{\delta}}{n}}\right)+\beta\log^3(n) R_n^2(H)+\frac{b\log\frac{1}{\delta}}{n}\right).
\end{align*}
where $K < 10^5$ is a numeric constant.
\end{proposition}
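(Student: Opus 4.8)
The plan is to prove this local-Rademacher ``optimistic rate'' bound of \citet{srebrosmooth} by exploiting the single structural fact that a nonnegative $\beta$-smooth loss is \emph{self-bounding}, and then feeding that fact into a standard localization argument. The starting point is the elementary observation that for any nonnegative $\beta$-smooth $\loss:\R\to\R$ one has $\loss'(u)^2 \le 2\beta\,\loss(u)$ for every $u$: applying the smoothness inequality at the point $v = u - \loss'(u)/\beta$ gives $0 \le \loss(v) \le \loss(u) - \loss'(u)^2/(2\beta)$, and rearranging yields the claim. Hence $|\loss'(u)| \le \sqrt{2\beta\,\loss(u)}$, so the effective Lipschitz constant of $\loss$ at a point is controlled by $\sqrt{\beta}$ times the square root of the \emph{loss value} there, rather than by a global Lipschitz constant. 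This is precisely the mechanism that converts an additive ``slow'' rate into the multiplicative ``fast'' rate in the statement, and it is where smoothness, as opposed to mere Lipschitzness, is indispensable.

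Next I would set up the uniform-convergence skeleton for the induced nonnegative loss class $\mathcal{F} = \{\, z \mapsto \loss(h(z)) : h \in H \,\}$, whose members take values in $[0,b]$. The plan is to invoke a local-Rademacher / Talagrand concentration bound for bounded nonnegative classes: with probability at least $1-\delta$, uniformly over $f\in\mathcal{F}$,
\[
    \E f \le \wh{\E} f + c\left(\sqrt{\smash[b]{\wh{\E} f}}\,\sqrt{\mathcal{V}} + \mathcal{V}\right),
    \qquad
    \mathcal{V} = (\text{localized complexity}) + \frac{b\log(1/\delta)}{n}.
\]
The variance-dependent (cross) term and the additive $b\log(1/\delta)/n$ deviation term are produced by a Bernstein/Talagrand argument, crucially using that for nonnegative functions bounded by $b$ the second moment is controlled by the mean, $\E[f^2] \le b\,\E[f]$. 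This is what localizes the deviation at the scale of $\wh{L}(h)$ and produces a $\sqrt{\wh{L}(h)}$ prefactor rather than a constant one.

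The main obstacle is to replace the abstract ``localized complexity'' by the concrete $\beta\log^3(n)\,R_n^2(H)$ term and to obtain the matching cross factor $\sqrt{\beta}\log^{1.5}(n)\,R_n(H)$. Here I would combine the self-bounding property with a contraction/peeling argument. Talagrand's contraction principle applied naively gives only the factor $\sqrt{\beta b}$ (the worst-case Lipschitz constant of $\loss$ over $[0,b]$), which would yield a slow rate. To sharpen it, I would localize: restricting attention to the subclass of hypotheses with empirical loss below a threshold $r$, the self-bounding estimate $|\loss'|\le\sqrt{2\beta r}$ holds on that subclass, so contraction contributes only $\sqrt{\beta r}\,R_n(H)$. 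A Dudley entropy-integral / covering-number argument then passes from the worst-case Rademacher complexity of $H$ to the localized complexity of $\mathcal{F}$, and carrying out this chaining over the $O(\log n)$ relevant scales of $r$ is exactly what introduces the polylogarithmic factors $\log^{1.5}(n)$ and its square $\log^3(n)$. Solving the resulting self-consistent (fixed-point) inequality in $r$ delivers $\mathcal{V} \asymp \beta\log^3(n)\,R_n^2(H) + b\log(1/\delta)/n$.

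Finally I would assemble the pieces: substituting $\E f = L(h)$ and $\wh{\E} f = \wh L(h)$ together with the complexity estimate of the previous step into the optimistic-rate template, and expanding $\sqrt{\mathcal{V}} \le \sqrt{\beta}\log^{1.5}(n)\,R_n(H) + \sqrt{b\log(1/\delta)/n}$ inside the cross term, recovers exactly the claimed inequality with a single numeric constant $K$. I expect the only genuinely delicate step to be the localized complexity bound of the third paragraph—balancing the self-bounded contraction against the chaining overhead to land on the $\sqrt{\beta}$ (and not $\sqrt{\beta b}$) dependence—while the remaining steps are essentially bookkeeping.
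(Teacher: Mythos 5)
You should first note that the paper itself contains no proof of \cref{general_rademacher}: it is imported verbatim from \citet{srebrosmooth} (their Theorem 1), so your proposal can only be judged against the original proof there. Your overall architecture does match that proof: the self-bounding inequality $\loss'(u)^2 \le 2\beta\,\loss(u)$ (your derivation via $v = u - \loss'(u)/\beta$ is correct, and is exactly \cref{lem:2L_serbro}), Talagrand/Bousquet-style concentration with the variance control $\E[f^2]\le b\,\E[f]$ for nonnegative bounded losses, a localized complexity bound for the induced loss class, and a fixed-point/peeling step in the spirit of Bartlett--Bousquet--Mendelson local Rademacher analysis.

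However, the central step of your third paragraph has a genuine gap. You claim that on the subclass $\{h : \wh L(h)\le r\}$ the self-bounding estimate gives $|\loss'|\le\sqrt{2\beta r}$ ``on that subclass,'' so that contraction contributes only $\sqrt{\beta r}\,R_n(H)$. This is false as stated: $\wh L(h)\le r$ bounds only the \emph{average} of the $n$ loss values, while individual values $\loss(h(z_i))$ can still be as large as $b$, so the pointwise Lipschitz constant available along the data is only $\sqrt{2\beta b}$---exactly the worst-case constant you set out to avoid. Moreover, Talagrand's contraction principle requires a fixed coordinatewise map with a uniform Lipschitz constant; there is no version that accepts a constant valid merely ``on average over the subclass.'' The original proof circumvents this with a different device: from the same self-bounding inequality, $\sqrt{\loss}$ is $\sqrt{\beta/2}$-Lipschitz, so one factors $\loss(h(z_i))-\loss(h'(z_i)) = \bigl(\sqrt{\loss(h(z_i))}-\sqrt{\loss(h'(z_i))}\bigr)\bigl(\sqrt{\loss(h(z_i))}+\sqrt{\loss(h'(z_i))}\bigr)$ and applies Cauchy--Schwarz; on the localized class the second factor has empirical $\ell_2$ norm $O(\sqrt{r})$, so a cover of $H$ at scale $\epsilon$ induces a cover of the localized loss class at scale $\approx\sqrt{\beta r}\,\epsilon$, after which Dudley's entropy integral and a Sudakov/fat-shattering-type conversion back to the worst-case $R_n(H)$ yield the $\log^{1.5} n$ (and, after the fixed point, $\log^3 n$) factors. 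In particular, the polylogarithms arise from chaining and the covering-number-to-Rademacher conversion, not merely from peeling over $O(\log n)$ scales of $r$. Your sketch correctly flags this as the delicate step, but as written it would not go through without the square-root/Cauchy--Schwarz device (or an equivalent covering argument).
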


\subsection{Properties of Gradient Descent on Smooth Objectives}

In this section we prove that GD satisfies the two desired properties- low norm and low optimization error. We begin with showing that the norm of $w_T$, the output of GD after $T$ iterations, is low, as stated in the following lemma,

\begin{lemma}
\label{norm_gd}
Let $\classfunc$ be a tail function and let $\loss\in \classtail$. Fix any $\epsilon>0$ and a point $w^*_\epsilon \in \R^d$ such that $\wh L(w^*_\epsilon) \leq \epsilon$ (exists due to realizability). 
Then, the output of $T$-steps GD, applied on $\widehat{L}$ with stepsize $\eta \leq \ifrac{1}{\beta}$ initialized at $w_1=0$ has,
 \begin{align*}
    \|w_{T}\|\leq 2\|w^*_\epsilon\|+2\sqrt{\eta \epsilon T}.
\end{align*}
\end{lemma}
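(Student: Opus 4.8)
The plan is to control the squared distance $\|w_t - w^*_\epsilon\|^2$ from the iterates to the reference point and to show that it can grow only by a small increment per step, governed by $\wh{L}(w^*_\epsilon)\le\epsilon$.

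First, I would record two elementary facts about the empirical objective $\wh{L}$. Because $\|z_i\|\le 1$ and $\ell$ is $\beta$-smooth on $\R$, each map $w\mapsto\ell(w\cdot z_i)$ is $\beta$-smooth, hence so is their average $\wh{L}$; together with $\wh{L}\ge0$ this yields the \emph{self-bounding} property
\[
    \|\nabla\wh{L}(w)\|^2 \le 2\beta\,\wh{L}(w)
    \qquad\text{for all }w,
\]
which follows by substituting the point $w-\tfrac1\beta\nabla\wh{L}(w)$ into the smoothness inequality and using nonnegativity of $\wh{L}$.

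Next I would derive a one-step inequality. Expanding the update rule $w_{t+1}=w_t-\eta\nabla\wh{L}(w_t)$ gives
\[
    \|w_{t+1}-w^*_\epsilon\|^2
    = \|w_t-w^*_\epsilon\|^2 - 2\eta\,\nabla\wh{L}(w_t)\cdot(w_t-w^*_\epsilon) + \eta^2\|\nabla\wh{L}(w_t)\|^2 .
\]
Bounding the inner product from below by convexity, $\nabla\wh{L}(w_t)\cdot(w_t-w^*_\epsilon)\ge\wh{L}(w_t)-\wh{L}(w^*_\epsilon)$, and the last term from above by the self-bounding property, I obtain
\[
    \|w_{t+1}-w^*_\epsilon\|^2
    \le \|w_t-w^*_\epsilon\|^2 - 2\eta(1-\eta\beta)\,\wh{L}(w_t) + 2\eta\,\wh{L}(w^*_\epsilon).
\]
Here the crucial point is that using both convexity \emph{and} the self-bounding bound at once lets the $\wh{L}(w_t)$ terms combine: since $\eta\le1/\beta$ we have $1-\eta\beta\ge0$, so the middle term is nonpositive and can be dropped, leaving $\|w_{t+1}-w^*_\epsilon\|^2\le\|w_t-w^*_\epsilon\|^2+2\eta\,\wh{L}(w^*_\epsilon)$.

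Finally I would telescope this from $t=1$ to $T-1$, plug in $w_1=0$ and $\wh{L}(w^*_\epsilon)\le\epsilon$ to get $\|w_T-w^*_\epsilon\|^2\le\|w^*_\epsilon\|^2+2\eta\epsilon T$, and conclude with the triangle inequality $\|w_T\|\le\|w^*_\epsilon\|+\|w_T-w^*_\epsilon\|$ and $\sqrt{a+b}\le\sqrt a+\sqrt b$, which yields $\|w_T\|\le2\|w^*_\epsilon\|+\sqrt{2\eta\epsilon T}$ --- slightly stronger than the stated bound. I do not expect a real obstacle: the only steps needing care are verifying that $\wh{L}$ inherits $\beta$-smoothness from $\ell$ through $\|z_i\|\le1$, and observing that it is the simultaneous cancellation of the loss terms (rather than merely bounding $\|\nabla\wh{L}\|^2$) that keeps the distance almost nondecreasing with a tightly controlled increment.
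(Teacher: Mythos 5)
Your proposal is correct and follows essentially the same argument as the paper's proof: the one-step expansion of $\|w_{t+1}-w^*_\epsilon\|^2$, convexity plus the self-bounding property $\|\nabla\wh{L}(w)\|^2\le 2\beta\wh{L}(w)$ to cancel the $\wh{L}(w_t)$ terms under $\eta\le 1/\beta$, telescoping, and the triangle inequality with $\sqrt{x+y}\le\sqrt{x}+\sqrt{y}$. Your final constant ($\sqrt{2\eta\epsilon T}$ instead of $2\sqrt{\eta\epsilon T}$) is in fact marginally sharper, but the argument is the same.
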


\begin{proof}%
From $\beta$-smoothness, we know that $\|\nabla
\wh{L}(w)\|^2 \leq 2\beta\wh{L}(w)$ for any $w$ (see \cref{lem:2L_serbro} in \cref{app_upper}). 
Therefore, by using $\eta \leq \ifrac{1}{\beta}$, for every $\epsilon$,
\begin{align*}
    \|w_{t+1}-w^*_\epsilon\|^2
    &=
    \|w_t-\eta\nabla \widehat{L}(w_t)- w^*_\epsilon\|^2\\
    &=
 \|w_{t}-w^*_\epsilon\|^2-2\eta\langle w_{t}-w^*_\epsilon, \nabla \widehat{L}(w_t)\rangle+
\eta^2\|\nabla \widehat{L}(w_t)\|^2\\&\leq
 \|w_{t}-w^*_\epsilon\|^2+2\eta \widehat{L}(w^*_\epsilon)-2\eta \widehat{L}(w_t)+
2\beta\eta^2\widehat{L}(w_t)\\&\leq 
 \|w_{t}-w^*_\epsilon\|^2+2\eta \widehat{L}(w^*_\epsilon) 
 \\&\leq 
 \|w_{t}-w^*_\epsilon\|^2+2\eta \epsilon.
\end{align*}
By summing until time $T$,
\begin{align*}
    \|w_{T}-w^*_\epsilon\|^2&\leq \|w_1-w^*_\epsilon\|^2+2T\eta \epsilon 
    =\|w^*_\epsilon\|^2+2\eta \epsilon T.
\end{align*}
By taking a square root, using the fact that $\forall x,y\geq 0 \ \sqrt{x+y}\leq \sqrt{x}+\sqrt{y}$ and using triangle inequality,
\begin{align*}
    \|w_{T}\|
    =
    \|w_{T}-w^*_\epsilon\|+\|w^*_\epsilon\|\leq 2\|w^*_\epsilon\|+2\sqrt{\eta \epsilon T}
    .
\end{align*}
\end{proof}

Now, we bound the optimization error of GD on every function $\loss\in \classtail$, by using a variant of Lemma 13 from \cite{schliserman22a}. The proof is fairly standard and appears in \cref{app_upper}. 
\begin{lemma}
\label{opt_error}
Let $\classfunc$ be a tail function and let $\loss\in \classtail$. Fix any $\epsilon>0$ and a point $w^*_\epsilon \in \R^d$ such that $\wh L(w^*_\epsilon) \leq \epsilon$.  
Then, the output of $T$-steps GD, applied on $\wh L$ with stepsize $\eta \leq \ifrac{1}{\beta}$ initialized at $w_1=0$ has,
\begin{align*}
    \wh L(w_T)
    \leq \frac{\norm{w^*_\epsilon}^2}{\eta T} + 2\epsilon
    .
\end{align*}
\end{lemma}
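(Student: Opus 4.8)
The plan is to run the textbook distance-based analysis of smooth convex gradient descent, specialized to the low-loss (realizable) regime, and then upgrade the resulting average-iterate guarantee to a last-iterate one. First I would record the two structural facts about $\wh L$ that drive everything: as a function of $w$ it is convex and $\beta$-smooth (each summand $\loss(w\cdot z_i)$ is a $\beta$-smooth convex function precomposed with a linear map of norm $\norm{z_i}\le 1$, and averaging preserves both properties), and consequently it satisfies the self-bounding inequality $\norm{\nabla \wh L(w)}^2 \le 2\beta\wh L(w)$ already invoked in the proof of \cref{norm_gd} (via \cref{lem:2L_serbro} in \cref{app_upper}).

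Next I would combine two elementary consequences of smoothness and convexity along the trajectory. From the descent lemma with $\eta \le \ifrac{1}{\beta}$, the objective values are nonincreasing and, more quantitatively, $\eta^2\norm{\nabla \wh L(w_t)}^2 \le 2\eta\brk{\wh L(w_t)-\wh L(w_{t+1})}$. From convexity, $\langle \nabla \wh L(w_t),\, w_t - \wopteps\rangle \ge \wh L(w_t) - \wh L(\wopteps)$. Substituting both into the expansion of $\norm{w_{t+1}-\wopteps}^2$ (exactly the computation opening the proof of \cref{norm_gd}) yields the clean per-step inequality
\begin{align*}
    2\eta\, \wh L(w_{t+1}) \le \norm{w_t - \wopteps}^2 - \norm{w_{t+1}-\wopteps}^2 + 2\eta\, \wh L(\wopteps).
\end{align*}
The point of routing the $\norm{\nabla\wh L}^2$ term through the descent-lemma difference, rather than bounding it directly by self-boundedness, is that it keeps the argument valid all the way up to $\eta = \ifrac{1}{\beta}$ instead of degenerating when the naive coefficient $(1-\beta\eta)$ vanishes.

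Summing over the iterations telescopes the distance terms, leaving $2\eta\sum_t \wh L(w_{t+1}) \le \norm{w_1 - \wopteps}^2 + 2\eta T\,\wh L(\wopteps) \le \norm{\wopteps}^2 + 2\eta T\epsilon$, using $w_1 = 0$ and $\wh L(\wopteps)\le \epsilon$. Finally, since the objective values are nonincreasing, $\wh L(w_T)$ is the smallest iterate value and is therefore at most the running average $\tfrac1T\sum_t \wh L(w_{t+1})$; dividing through gives a bound of the claimed form with room to spare (the slack between $\epsilon$ and $2\epsilon$, and the constant in front of $\norm{\wopteps}^2/(\eta T)$, comfortably absorb the off-by-one in the paper's indexing convention). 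The only genuinely non-mechanical step is this average-to-last-iterate conversion via monotonicity of $\wh L$ along the trajectory; everything else is the standard telescoping for smooth convex descent, which is why the result is quoted as a variant of Lemma~13 of \citet{schliserman22a}.
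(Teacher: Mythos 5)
Your proposal is correct in substance but follows a genuinely different route from the paper's. The paper's proof of \cref{opt_error} splits into (i) the descent lemma, giving monotonicity of the values $\wh{L}(w_t)$ and hence $\wh{L}(w_T)\leq \frac{1}{T}\sum_{t=1}^T \wh{L}(w_t)$, and (ii) a standard regret bound over the iterates $w_1,\dots,w_T$, whose gradient-norm term $\frac{\eta}{2T}\sum_t\norm{\nabla\wh{L}(w_t)}^2$ is bounded via the self-bounding property (\cref{lem:2L_serbro}) by $\frac{\eta\beta}{T}\sum_t\wh{L}(w_t)$ and then \emph{absorbed} into the left-hand side. That absorption is exactly where the paper invokes $\eta\leq\ifrac{1}{2\beta}$ (despite the lemma statement allowing $\eta\leq\ifrac{1}{\beta}$). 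You instead route the gradient-norm term through the descent-lemma decrease, obtaining a per-step inequality on $\wh{L}(w_{t+1})$ that telescopes by itself. This is cleaner, avoids the absorption step entirely, remains valid for all $\eta\leq\ifrac{1}{\beta}$ (thus matching the stepsize condition as stated), and even yields slightly better constants.

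One step must be restated, however. Summing your per-step inequality over $t=1,\dots,T$ produces an average over the iterates $w_2,\dots,w_{T+1}$, and $\wh{L}(w_T)$ is \emph{not} the minimum of these values---$\wh{L}(w_{T+1})$ is smaller---so the assertion that ``$\wh{L}(w_T)$ is the smallest iterate value and is therefore at most the running average'' is false as written: for $T=2$ it would say $\wh{L}(w_2)\leq\tfrac12\bigl(\wh{L}(w_2)+\wh{L}(w_3)\bigr)$, i.e.\ $\wh{L}(w_2)\leq\wh{L}(w_3)$, contradicting descent. The repair is precisely the off-by-one bookkeeping you allude to: truncate the telescoping at $t=T-1$, so the average runs over $w_2,\dots,w_T$, of which $w_T$ \emph{is} the minimum by monotonicity. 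This gives
\begin{align*}
    \wh{L}(w_T)
    \;\leq\;
    \frac{\norm{\wopteps}^2}{2\eta(T-1)}+\epsilon
    \;\leq\;
    \frac{\norm{\wopteps}^2}{\eta T}+2\epsilon
    \qquad (T\geq 2),
\end{align*}
while the degenerate case $T=1$ (where $w_T=w_1=0$) follows separately from \cref{bound_smooth}: $\wh{L}(0)\leq 2\wh{L}(\wopteps)+\beta\norm{\wopteps}^2\leq 2\epsilon+\ifrac{\norm{\wopteps}^2}{\eta}$, using $\eta\leq\ifrac{1}{\beta}$. With these two lines added, your argument is complete and fully rigorous.
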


\subsection{Proof of \cref{upper}}
\label{sec:GD}

We now turn to prove \cref{upper}. The proof is a simple consequence of the properties proved above and \cref{general_rademacher}.
We first claim that there exists a model $w^*_\epsilon$ with low norm such that $\wh{L}(w^*_\epsilon) \leq \epsilon$, which implies, through \cref{norm_gd,opt_error}, that $w_T$ of gradient descent has both low optimization error \emph{and} it remains bounded within a ball of small radius.
Then, we use \cref{general_rademacher} to translate the low optimization error to low risk.

\begin{proof}[of \cref{upper}]
First, we show that there exists a model $w^*_\epsilon$ with low norm such that $\wh{L}(w^*_\epsilon)\leq \epsilon$.
Let $\epsilon\leq \frac{1}{2}$ and $\loss\in \classtail$. 
By separability, there exists a unit vector $w^*$ such that $w^*\cdot  z_i\geq \margin$ for every $z_i$ in the training set $S$. Moreover, 
$\loss$ is monotonic decreasing. Then, for $w^*_\epsilon=\brk!{\ifrac{\classfunc^{-1}(\epsilon)}{\margin}} w^*$ and every $z_i\in S$,
\begin{align*}
    \loss(w^*_\epsilon \cdot z_i)= \loss\left(\frac{\classfunc^{-1}(\epsilon)}{\margin}w^*\ \cdot z_i\right) \leq \loss\left(\frac{\classfunc^{-1}(\epsilon)}{\margin}\cdot \margin\right)
    \leq \loss\brk!{\classfunc^{-1}(\epsilon)}.
\end{align*}
 Then, by the fact that $\classfunc^{-1}(\epsilon)\geq 0$, we have
 $
     \loss(w^*_\epsilon \cdot z_i)
     \leq \loss\brk{ \classfunc^{-1}(\epsilon) }
     \leq \classfunc\brk{ \classfunc^{-1}(\epsilon) }
     = \epsilon
$
for all $i$, hence
\begin{align*}
    \wh{L}(w^*_\epsilon)=\frac{1}{n}\sum_{i=1}^n \loss(w^*_\epsilon \cdot z_i)\leq \epsilon.
\end{align*}
Now, for $\epsilon$ such that $\eta\gamma^2T\leq \ifrac{(\classfunc^{-1}(\epsilon))^2 }{\epsilon}$, we get by \cref{norm_gd}, 
\begin{align*}
    \|w_{T}\|\leq 2\|w^*_\epsilon\|+2\sqrt{\eta \epsilon T}\leq     \frac{4\classfunc^{-1}(\epsilon)}{\margin}.
\end{align*}
For the same $\epsilon$, by \cref{opt_error},
\begin{align*}
     \widehat{L} (w_T)\leq \frac{\norm{w^*_\epsilon}^2}{\eta T} + 2\epsilon\leq 3\frac{\classfunc^{-1}(\epsilon)^2}{\gamma^2\eta T}
     .
\end{align*}
Denote $\normball=\{w: \|w\|\leq r_\epsilon\}$, where for brevity $r_\epsilon = \ifrac!{4\classfunc^{-1}(\epsilon)}{\gamma}$. 
We have, by \cref{bound_smooth}, $f(x) \leq 2f(y)+ \beta \norm{x-y}^2$ for all $x,y \in \R^d$ (see proof in \cref{app_upper}).
Then, together with the fact that  $\|z\|,\|z'\| \leq 1$ and choosing $\epsilon$ such that $\epsilon\leq \ifrac!{\classfunc^{-1}(\epsilon)^2}{\gamma^2\eta T}$, with probability $1$,
\begin{align*}
    b
    &=\max_{w\in \normball} \abs{\loss(w \cdot z)}
    \\
    &\leq 2\loss(w^*_\epsilon z) + 4\beta r_\epsilon^2 
    \\
    &\leq 2\epsilon + 4\beta r_\epsilon^2
    \\
    &\leq \frac{r_\epsilon^2}{8\eta T} + 4\beta r_\epsilon^2
    .
\end{align*}
Moreover, $\normball$ is hypothesis class of linear predictors with norm at most $r_\epsilon$.  We know that the norm of the examples is at most $1$, thus, it follows that the Rademacher complexity of $\normball$ is $R_n(\normball)=\ifrac!{r_\epsilon}{\sqrt{n}}$~\citep[e.g.,][Theorem 3]{kakade_ball}.

Now, by the choice of $\epsilon$, we have $\wh{L}(w_T)\leq \ifrac!{3r_\epsilon^2}{16\eta T}$.
Thus, \cref{general_rademacher} implies that with probability at least $1-\delta$, for every $w\in \normball$ and any $\epsilon$ such that $\epsilon\leq \ifrac!{\classfunc^{-1}(\epsilon)^2}{\gamma^2\eta T}$, 
\begin{align*}
     L(w_T)\leq\frac{3r_\epsilon^2}{16\eta T}+K\left(\sqrt{\frac{3r_\epsilon^2}{16\eta T}}\left(\sqrt{\beta}\log ^{1.5}n\frac{r_\epsilon}{\sqrt{n}}+\sqrt{\frac{b\log\frac{1}{\delta}}{n}}\right)+\beta\log^3n\frac{r_\epsilon^2}{n}+\frac{b\log\frac{1}{\delta}}{n}\right).
\end{align*}
Plugging in the bound on $b$, dividing by $r_{\epsilon}^2$ and using twice the fact that $xy \leq \tfrac{1}{2} x^2 + \tfrac{1}{2} y^2$ for all $x,y$,
\begin{align*}
    &\frac{L(w_T)}{r_\epsilon^2}
    \leq \frac{3}{16\eta T} + K\!\left(\!\sqrt{\frac{3}{16\eta T}}\!\left(\!\frac{\sqrt{\beta}\log ^{1.5}n}{\sqrt{n}}+\sqrt{\frac{(\frac{1}{8\eta T} + 4\beta)\log\frac{1}{\delta}}{n}}\right)\!
        +\frac{\beta\log^3n}{n}+\frac{(\frac{1}{8\eta T} + 4\beta)\log\frac{1}{\delta}}{n}\right)
    \\
    &\leq \frac{3}{16\eta T}+K\left(\sqrt{\frac{3}{16\eta T}}\left(\frac{\sqrt{\beta}\log ^{1.5}n}{\sqrt{n}}+\sqrt{\frac{\log\frac{1}{\delta}}{8\eta T n} +\frac{4\beta\log\frac{1}{\delta}}{n}}\right)+\frac{\beta\log^3n}{n}+\frac{\log\frac{1}{\delta}}{8\eta T n} +\frac{4\beta\log\frac{1}{\delta}}{n}\right)
    \\
    &\leq 
    \frac{3}{16\eta T}+K\left(\frac{3}{32\eta T}+\frac{1}{2}\left(\frac{\sqrt{\beta}\log ^{1.5}n}{\sqrt{n}}+\sqrt{\frac{\log\frac{1}{\delta}}{8\eta T n} +\frac{4\beta\log\frac{1}{\delta}}{n}}\right)^2+\frac{\beta\log^3n}{n}+\frac{\log\frac{1}{\delta}}{8\eta T n} +\frac{4\beta\log\frac{1}{\delta}}{n}\right)
    \\
    &\leq 
    \frac{3}{16\eta T}+K\left(\frac{3}{32\eta T}+\frac{\beta\log ^{3}n}{n}+\frac{\log\frac{1}{\delta}}{8\eta T n} +\frac{4\beta\log\frac{1}{\delta}}{n}+\frac{\beta\log^3n}{n}+\frac{\log\frac{1}{\delta}}{8\eta T n} +\frac{4\beta\log\frac{1}{\delta}}{n}\right)
    \\
    &\leq 
    \frac{7K}{32\eta T}+\frac{2K\beta
    \left(\log ^{3}n+4\log\frac{1}{\delta}\right)}{n}+\frac{K\log\frac{1}{\delta}}{4\eta T n}
\end{align*}
The theorem follows by rearranging the inequality.
\end{proof}

\section{Risk Lower Bounds}
\label{sec:lower}

In this section we present our second main result: a lower bound showing that the bound we proved in \cref{sec:upper} for gradient descent is essentially tight for loss functions in the class $\classtail$, for any given tail function $\classfunc$ and any $\beta>0$.
Formally, we prove the following theorem.

\begin{theorem} \label{lower}
There exists a constant $C$ such that the following holds.
For any tail function $\classfunc$, sample size $n \geq 35$ and any $T$, there exist a distribution $\D$ and a loss function $\loss\in\classtail$, such that for $T$-steps GD over a sample $S=\{z_i\}_{i=1}^n$ sampled i.i.d.~from $\D$, initialized at $w_1=0$ with stepsize $\eta \leq \ifrac{1}{2\beta}$, it holds that
\begin{align*}
        \E[ L(w_T) ]
        \geq C \frac{\beta (\classfunc^{-1}(128\epsilon))^2}{\gamma^2n} + C \frac{(\classfunc^{-1}(8\epsilon))^2}{\gamma^2\eta T} .
\end{align*}
for any $\epsilon \leq \frac{1}{256}$ such that $\eta\gamma^2T \geq \ifrac!{(\classfunc^{-1}(\epsilon))^2}{\epsilon}$.
\end{theorem}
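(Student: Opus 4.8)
\medskip
The plan is to prove the two summands separately and then merge the two hard instances. Write $B := (\classfunc^{-1}(8\epsilon))^2/(\gamma^2\eta T)$ for the optimization term and $A := \beta(\classfunc^{-1}(128\epsilon))^2/(\gamma^2 n)$ for the generalization term. Since $A+B \le 2\max\{A,B\}$, it suffices to produce a single distribution on which GD pays both costs, and I would do this by placing an ``optimization-hard'' and a ``generalization-hard'' construction in two orthogonal coordinate blocks, mixing them with equal probability and using a common loss $\loss\in\classtail$. Because $\loss$ depends on an example only through the scalar $w\cdot z$ and every $z$ lives in one block, GD decouples across blocks --- the gradient contributed by a block-$i$ example lies in block $i$ --- so from $w_1=0$ the iterate splits as $w_t=(w_t^{(1)},w_t^{(2)})$, each part running GD on its own block, and $L(w_T)=\tfrac12 L^{(1)}(w_T^{(1)})+\tfrac12 L^{(2)}(w_T^{(2)})$. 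Equal mixing only rescales the per-block sample size and effective stepsize by constants, which I would absorb into $C$ and into the numerical factors $8,128$. In each block the loss is obtained by locally modifying the reference tail $\classfunc$ (which itself lies in $\classtail$), keeping it convex, $\beta$-smooth, monotone and dominated by $\classfunc$.

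For the optimization term, the key observation is that $B \approx \|w^*_{\epsilon}\|^2/(\eta T)$ with $\|w^*_{\epsilon}\|=\classfunc^{-1}(\epsilon)/\gamma$ at the boundary value of $\epsilon$ (where $\eta\gamma^2T=(\classfunc^{-1}(\epsilon))^2/\epsilon$); that is, $B$ is exactly the \emph{worst-case} smooth-convex rate of gradient descent. The difficulty is that on a naive single-point instance GD converges \emph{faster} than this generic rate: the self-bounding inequality $\|\nabla\wh L\|^2\le2\beta\wh L$ exploited in \cref{norm_gd,opt_error} accelerates GD and shaves off a tail-dependent factor --- precisely why adapting the single-point construction of \citet{shamir2021gradient} turns logarithmic slack into polynomial slack. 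I would therefore force GD onto its worst-case behaviour through a chain/Nesterov-type construction: choose data points (scaled differences of basis vectors) so that $\wh L$ mimics Nesterov's hard smooth-convex function near the origin, with curvature $\Theta(\beta)$ and a minimum-norm near-optimizer of norm $\Theta(\classfunc^{-1}(\epsilon)/\gamma)$. Starting from $w_1=0$, GD's iterates stay confined to an expanding subspace (one fresh coordinate per step), so after $T$ steps the realizable margin is too small and $\wh L(w_T)-\min\wh L=\Omega(\|w^*_\epsilon\|^2/(\eta T))=\Omega(B)$; building $\loss$ from tangent-line modifications of $\classfunc$ on the visited range makes its gradient piecewise constant and GD's progress transparent, and realizes loss values $\Theta(\epsilon)$ exactly where needed. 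A point-mass/deterministic realization makes $L(w_T)=\wh L(w_T)$, transferring the bound to the population risk.

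For the generalization term, the target $A=\Theta(\beta\|w^*_\epsilon\|^2/n)$ is exactly the local-Rademacher contribution $\beta R_n^2(\normball)$ entering the upper bound through \cref{general_rademacher}, so the goal is a matching lower bound on the population--empirical gap of a smooth generalized linear model. I would use a ``bulk'' direction that GD fits (driving $\|w_T\|=\Theta(\classfunc^{-1}(\epsilon)/\gamma)$) together with a small orthogonal perturbation of probability $\Theta(1)$ and controlled variance, tuned so that the empirical minimizer GD approaches overshoots the population-optimal margin by $\Theta(\|w_T\|/\sqrt n)$. Since $\loss$ has curvature $\Theta(\beta)$ in this region, this estimation error inflates the population risk over the empirical risk by $\Theta(\beta\|w_T\|^2/n)=\Omega(A)$ in expectation over the sample, with the fluctuation lower-bounded by a standard second-moment argument. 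The factors $128$ versus $8$ are the slack needed to translate between the scale at which GD operates and the scales $\classfunc^{-1}(\cdot)$ entering the two terms.

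The main obstacle is the optimization term: I must exhibit a \emph{separable} generalized linear model, with a loss genuinely in $\classtail$ for the \emph{prescribed} tail $\classfunc$, on which GD provably attains only its worst-case smooth-convex rate. This is delicate for three reasons: (i) the realizability constraints ($w^*\cdot z\ge\gamma$ almost surely and $\|z\|\le1$) limit the usable geometry of a chain construction, so the number of coordinates --- and hence the range of $T$ for which the bound holds --- must be managed carefully; (ii) unlike a fixed quadratic, the effective curvature of $\wh L$ varies as the margins $w_t\cdot z_i$ evolve, so Nesterov's argument must be restricted to the region actually visited by GD, which is where the tangent-line/flat-region device helps; and (iii) the whole argument must hold uniformly across all tail decay rates and all valid $\epsilon$ with a single universal constant, which forces the construction's scale to be pinned to the boundary value of $\epsilon$, with the bound for other $\epsilon$ following from monotonicity of $\classfunc^{-1}$.
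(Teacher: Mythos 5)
Your treatment of the $1/(\eta T)$ term is where the proposal has a genuine gap, and it is also where you diverge most from the paper. The paper never lower-bounds the optimization error of GD at all: in its construction for this term (\cref{lower_smallt}) the empirical risk of $w_T$ is in fact \emph{small} ($\wh{L}(w_T)\le 4\epsilon$, via \cref{opt_error}). Instead, it plants a rare example $z_2=(-\tfrac12,3\gamma)$ with population probability $p=\classfunc^{-1}(8\epsilon)/(72\gamma^2T\eta)$ nearly opposite a common example $z_1=(1,0)$, and takes the loss equal to $\classfunc$ on $[0,\infty)$ with a \emph{linear} (hence $1$-Lipschitz) extension for negative arguments. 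Driving the empirical risk down forces $w_T(1)\ge\classfunc^{-1}(8\epsilon)$, while Lipschitzness together with the small empirical fraction $\delta_2\le 2p$ of $z_2$ (Markov, probability $\ge\tfrac12$) caps $w_T(2)\le 3\gamma\delta_2\eta T$; hence $w_T\cdot z_2\le-\tfrac14\classfunc^{-1}(8\epsilon)$, and the \emph{population} pays $p\cdot\Omega(\classfunc^{-1}(8\epsilon))=\Omega\bigl((\classfunc^{-1}(8\epsilon))^2/(\gamma^2\eta T)\bigr)$ at test time. The lower bound is a statistical effect that coexists with fast optimization of $\wh{L}$.

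Your plan --- forcing GD to its worst-case smooth-convex rate on $\wh{L}$ via a Nesterov-type chain and then equating $L$ with $\wh{L}$ through a deterministic realization --- contains steps that fail concretely. First, the ``one fresh coordinate per step'' property on which chain lower bounds rest is false for GLMs with monotone decreasing losses: the gradient is $-\tfrac1n\sum_i\loss'(w\cdot z_i)z_i$ with $\loss'\le 0$, so the very first step already moves along $\bar z=\tfrac1n\sum_i z_i$, and every example keeps pushing along $+z_i$ at every step; there is no gating of coordinate $i+1$ on coordinate $i$. Your flat-region device cannot create one: a convex, nonincreasing, nonnegative loss dominated by $\classfunc$ can only be flat at value $0$, where the example contributes nothing, and if $\loss(0)>0$ then all examples are active from initialization. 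Second, realizability ($w^*\cdot z\ge\gamma$ for a common unit $w^*$, $\|z\|\le1$) caps the length of any chain at $O(\gamma^{-2/3})$ coordinates, whereas the theorem must hold for \emph{every} $T$; and a true point mass supports only one example, on which, as you yourself note, GD beats the worst-case rate. You flag this term as ``the main obstacle'' and leave it unresolved; it is exactly the point where the paper's rare-example/Lipschitz-cap idea is needed, and that idea is absent from the proposal. (Your sketch for the $\beta/n$ term is closer in spirit: the paper also pays through the quadratic branch of the loss in an orthogonal direction, but via an example of probability $1/n$ that is \emph{missing} from the sample with constant probability, in \cref{lower_bigt}, rather than a $1/\sqrt{n}$ CLT fluctuation; and the two terms are merged exactly as your first inequality suggests --- a case split between two separate constructions --- with no need for the orthogonal-block mixture.)
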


We remark that the right-hand side of the bound is well defined, as we restrict $\epsilon$ to be sufficiently small so as to ensure that all arguments to $\classfunc^{-1}$ are at most $\tfrac{1}{2}$ (recall that $\classfunc$ admits all values in $[0,\tfrac12]$ due to our assumptions that $\phi(0) \geq \tfrac12$).
Further, the lower bound above matches the upper bound given in \cref{upper} up to constants, unless the tail function $\classfunc$ decays extremely slowly, and slower than any polynomial (at this point, however, the entire bound becomes almost vacuous).

To prove \cref{lower}, we consider two different regimes: the first is where $T \gg n$, when the first term in the right-hand side of the bound is dominant; and the $T \ll n$ regime where the second term is dominant.
We begin by focusing on the first regime, and prove the following.

\begin{lemma} \label{lower_bigt}
There exists a constant $C_1$ such that the following holds.
For any tail function $\classfunc$, sample size $n \geq 35$ and any $\gamma$ and $T$, there exist a distribution $\D$ with margin $\gamma$, a loss function $\loss\in\classtail$ such that for GD over a sample $S=\{z_i\}_{i=1}^n$  sampled i.i.d.~from $\D$, initialized at $w_1=0$ with stepsize $\eta \leq \ifrac{1}{2\beta}$, it holds that
\begin{align*}
    \E[L(w_T)] 
    \geq 
    C_1 \frac{\beta\classfunc^{-1}(128\epsilon)^2}{\gamma^2n}
    ,
\end{align*}
for any $\epsilon \leq \frac{1}{256}$ such that $\eta\gamma^2T \geq \ifrac!{(\classfunc^{-1}(\epsilon))^2}{\epsilon}$.
\end{lemma}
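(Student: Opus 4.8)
The plan is to prove this through a single hard instance built around a \emph{missing-mass} argument. I would construct a distribution $\D$ that is realizable with margin $\gamma$ by a unit vector $w^*$, that places a small probability $p=\Theta(1/n)$ on a ``rare'' example $z_-$ and the remaining mass on a ``bulk'' example $z_+$, together with a specific loss $\ell\in\classtail$ tuned so that whenever $z_-$ is absent from the training sample the population risk is forced to be large. For the loss I would take $\ell$ to coincide with the tail function $\classfunc$ on $[0,\infty)$ and extend it below zero as a quadratic, $\ell(u)=\classfunc(0)+\classfunc'(0)\,u+\tfrac\beta2 u^2$ for $u<0$. This extension keeps $\ell$ nonnegative, convex, $\beta$-smooth, strictly decreasing, and bounded by $\classfunc$ on the nonnegative axis (so that $\ell\in\classtail$), while supplying the crucial quadratic lower bound $\ell(u)\ge\tfrac\beta2 u^2$ for all $u\le 0$, which is what lets a negative margin translate into a loss that scales with $\beta$.

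The first step is the sampling event. Since $p=\Theta(1/n)$, the probability that none of the $n$ i.i.d.\ draws equals $z_-$ is $(1-p)^n=\Theta(1)$; this is the point at which the hypothesis $n\ge 35$ is used, to pin down an explicit constant. Conditioning on this event, every gradient $\nabla\wh L(w_t)$ lies in the span of the bulk example, so the whole GD trajectory—and in particular $w_T$—has no component along the direction that only $z_-$ contributes. Hence $w_T\cdot z_-$ is determined entirely by the bulk, and I would arrange the geometry so that this inner product is \emph{negative}: the bulk example is chosen anti-correlated with $z_-$ (while both remain separated by $w^*$ at margin $\gamma$), so that driving the bulk loss down pushes $w_T$ in a direction on which $z_-$ has negative projection.

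The second step is to quantify the magnitude $|w_T\cdot z_-|$, for which I would combine \cref{opt_error} with the reference point $w^*_\epsilon$ from \cref{sec:GD}. The horizon condition $\eta\gamma^2 T\ge (\classfunc^{-1}(\epsilon))^2/\epsilon$ forces $\wh L(w_T)=\ell(w_T\cdot z_+)$ down to $O(\epsilon)$, hence $w_T\cdot z_+\gtrsim\classfunc^{-1}(O(\epsilon))$; the anti-correlated geometry should then convert this into a bound of the form $|w_T\cdot z_-|\gtrsim\classfunc^{-1}(\Theta(\epsilon))/\gamma$. Feeding this into the quadratic lower bound on $\ell$ and multiplying by $p\cdot\Pr[\text{absent}]=\Theta(1/n)$ would give $\E[L(w_T)]\ge C_1\,\beta(\classfunc^{-1}(128\epsilon))^2/(\gamma^2 n)$, after absorbing the multiplicative distortions in the argument of $\classfunc^{-1}$ into the constant $128$.

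I expect the main obstacle to be precisely the emergence of the $1/\gamma$ factor in $|w_T\cdot z_-|$ (equivalently, the $1/\gamma^2$ in the final bound). A naive two-point anti-correlated instance with unit-norm examples only drives $w_T$ to norm $\sim\classfunc^{-1}(\epsilon)$ and yields $|w_T\cdot z_-|\sim\classfunc^{-1}(\epsilon)$, which falls short by a factor $1/\gamma$; and realizability blocks the obvious remedy, since forcing GD to norm $\sim\classfunc^{-1}(\epsilon)/\gamma$ makes it grow along $w^*$, on which \emph{every} example—including $z_-$—has positive margin. Overcoming this requires engineering the bulk so that fitting it to loss $\epsilon$ provably demands a large-norm solution whose trajectory is nonetheless negatively aligned with the rare example, all while keeping the instance realizable at margin exactly $\gamma$ and the loss inside $\classtail$. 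Verifying this delicate balance, and tracking the constants so that the arguments of $\classfunc^{-1}$ come out as in the statement, is the technical heart of the argument.
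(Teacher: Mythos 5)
Your high-level strategy is the same as the paper's: extend $\classfunc$ quadratically below zero so that $\ell\in\classtail$ while $\ell(u)\ge\tfrac{\beta}{2}u^2$ for $u\le 0$, condition on a rare example (probability $\Theta(1/n)$) being absent from the sample, note that GD then never moves in a direction only that example can supply, and convert a forced negative inner product into a $\beta$-scaled quadratic loss paid with probability $\Theta(1/n)$ at test time. However, there is a genuine gap, and you name it yourself: you do not construct the bulk that forces $|w_T\cdot z_-|\gtrsim \classfunc^{-1}(\Theta(\epsilon))/\gamma$, and you correctly observe that a naive two-point instance cannot do it (it only gives $|w_T\cdot z_-|\sim\classfunc^{-1}(\epsilon)$, losing the $1/\gamma$ factor) while realizability blocks simply inflating the norm along $w^*$. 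Since this is exactly the step that produces the $\gamma^{-2}$ in the claimed bound, the proposal as written does not prove the lemma.

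The paper's resolution, which is the idea missing from your outline, is to use a \emph{two-example bulk} plus a third coordinate for the rare point. Concretely, in $\R^3$ take $z_1=(1,0,0)$ with probability $\tfrac{59}{64}(1-\tfrac1n)$, $z_2=(-\tfrac12,3\gamma,0)$ with probability $\tfrac{5}{64}(1-\tfrac1n)$, and the rare $z_3=(0,-\tfrac18,4\gamma+\tfrac14)$ with probability $\tfrac1n$; the vector $w^*=(\gamma,\tfrac12,\tfrac14)$ gives all three points margin exactly $\gamma$. The two bulk examples are nearly antipodal and are separated only in the second coordinate, where $z_2$'s component is just $3\gamma$: fitting $z_1$ forces $w_T(1)\ge 0$, so fitting $z_2$ to loss $O(\epsilon)$ forces $3\gamma\, w_T(2)\ge w_T\cdot z_2\ge\classfunc^{-1}(32\wh L(w_T))$, i.e.\ $w_T(2)\gtrsim\classfunc^{-1}(\Theta(\epsilon))/\gamma$ --- this is precisely where the $1/\gamma$ comes from. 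Meanwhile $z_3$ earns its margin entirely through the third coordinate, which GD never touches when $z_3\notin S$ (only $z_3$ has a nonzero third entry), so on that event $w_T\cdot z_3=-\tfrac18 w_T(2)\lesssim-\classfunc^{-1}(\Theta(\epsilon))/\gamma$ and the quadratic branch yields $\ell(w_T\cdot z_3)\gtrsim\beta(\classfunc^{-1}(128\epsilon))^2/\gamma^2$. A secondary point your outline also leaves open is that one must control the \emph{empirical fraction} of $z_2$ in the sample (the paper conditions on it lying in $[\tfrac1{32},\tfrac18]$, via Chebyshev, jointly with the missing-mass event, at total probability $\ge\tfrac1{120en}$): the lower bound on that fraction is needed to force $w_T\cdot z_2\ge 0$ and the inverse-loss bound on $w_T\cdot z_2$, and the upper bound keeps enough mass on $z_1$ to force $w_T(1)\ge 0$.
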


For the proof, we construct a hard learning problem for which the risk of GD can be lower bounded.
We define a loss function $\ell$ that, for $x>0$, decays to zero at the same rate as $\classfunc$ and, for $x \leq 0$, is a quadratic function.
The distribution $\D$ is constructed so that we have an example $z_1$ which appears frequently in the data set. In addition, there is another possible example $z_2$ which also appears frequently in the data set and is almost opposite to $z_1$, except having a small component which is orthogonal to $z_1$. Then, for achieving small optimization error, the GD iterate must have a significant component in the direction of $z_2$ which is orthogonal to $z_1$. 
Moreover, there is another (almost) opposite example $z_3$, that with constant probability, does \emph{not} appear in the training dataset. 
The bound is derived by the fact, if $z_3$ is sampled at test time (with probability roughly $1/n$), the error is quadratically large in the magnitude of the GD iterate. 

We remark that the norm of the GD iterate in the hard learning problem used in the proof is of order
$\Omega\left(\ifrac{\classfunc^{-1}(\epsilon)}{\gamma}\right)$. Then, we can conclude that the bound on the norm of the iterate that we give in the proof of \cref{upper} is tight up to constants.

\begin{proof}
Given $\gamma\leq \frac{1}{8}$, let us define the following distribution $\D$:
\[
    \D = 
    \begin{cases}
    z_1 := (1,0,0) 
    & \text{with prob.~$\frac{59}{64}(1-\frac{1}{n})$} ;
    \\
    z_2 := (-\frac{1}{2},3\gamma,0) 
    & \text{with prob.~$\frac{5}{64}(1-\frac{1}{n})$} ;
    \\
    z_3:=(0,-\frac{1}{8},4\gamma+\frac{1}{4}) 
    & \text{with prob.~$\frac{1}{n}$} ,
    \end{cases}
\] 
and loss function:
\[
    \loss(x) = 
    \begin{cases}
      \classfunc(x) 
      & x\geq 0 ;
      \\
      \classfunc(0) + \classfunc'(0) x + \frac{\beta}{2}x^2 
      & x<0.
    \end{cases}
\]
First, note that the distribution is separable:
for $w^*=(\gamma,\frac{1}{2},\frac{1}{4})$ it holds that 
$w^*z_i=\gamma$ for every~$i\in\{1,2,3\}$. 
Moreover, \cref{bigt_char} in \cref{sec:proofs-lower} ensures that indeed $\loss\in \classtail$. 

Next, let $S$ be a sample of $n$ i.i.d.~examples from $\D$ and let $z' \sim \D$ be a validation example independent from $S$. Denote by $\delta_2 \in [0,1]$ the fraction of appearances of $z_2$ in the sample $S$, and by $A_1,A_2$ the following events; 
\[
    A_1=\{z'=z_3 \wedge z_3\notin S\},
    \qquad
    A_2=\brk[c]!{\delta_2 \in \brk[s]!{\tfrac{1}{32},\tfrac{1}{8}}}
    .
\]
In \cref{totalprob} (found in \cref{sec:proofs-lower}), we show that
\begin{align} \label{eq:prA1A1}
    \Pr(A_1\cap A_2)\geq \frac{1}{120en}.
\end{align}
Furthermore, as in the proof of \cref{upper}, there exists a vector $w^*_\epsilon$ which holds $\|w^*_\epsilon\| \leq \ifrac!{\classfunc^{-1}(\epsilon)}{\gamma}$. 
Then by \cref{opt_error} and the choice of $\epsilon$,
\begin{align}
    \label{opt_lower_bigT}
    \wh{L}(w_T)\leq 2\epsilon + \frac{2\classfunc^{-1}(\epsilon)^2}{\gamma^2}\leq 4\epsilon.
\end{align}
For the remainder of the proof, we condition on the event $A_1 \cap A_2$.
First, we show that $w_t\cdot z_2\geq 0$.
Indeed, if it were not the case, then $\loss(w_T\cdot z_2)>\classfunc(0)$; together with \cref{opt_lower_bigT} we obtain
\begin{align*}
    \frac{1}{64}\geq 4\epsilon\geq \widehat{L}(w_T)>\delta_2\loss(w_T\cdot z_2)\geq \frac{1}{32} \classfunc(0).
\end{align*}
which is a contradiction to $\classfunc(0) \geq \tfrac{1}{2}$. 
Moreover, $w_T(1)\geq 0$. 
Again, we show this by contradiction. 
Conditioned on $A_2$, we have $\delta_1 >\frac{7}{8}$. Then,
    if $w_T(1)< 0$, $\loss (w_T\cdot z_1)> \classfunc(0)$, and
\begin{align*}
    \frac{1}{64}\geq 4\epsilon\geq\widehat{L}(w_T)\geq\delta_1\loss (w_T\cdot z_1)>\frac{7}{8}\classfunc(0)\geq \frac{7}{16}.
\end{align*}
which is a contradiction.
In addition, we notice that $z_3$ is the only possible example whose third entry is non zero. 
Given the event $A_1$, we know that $z_3$ is not in $S$. Equivalently, for every $z\in S$, $z(3)=0$.
    As a result,
    \begin{align*}
    w_t(3)=\eta \sum_{s=1}^{t-1}\nabla \widehat{L}(w_t)=\eta\sum_{s=1}^{t-1}\frac{1}{n}\sum_{z\in S}z(3)\loss'(w_sz)=0.
    \end{align*}
Then, we get that,
\begin{equation}
    \label{negative_w_t3}
    w_T\cdot z_3 = -\frac{1}{8}w_t(2).
\end{equation}
Then, using the fact that $w_T\cdot z_2\geq 0$, $\loss(w_T\cdot z_2)=\classfunc(w_T\cdot z_2)$, and conditioned on $A_2$, we have 
$$
    \loss(w_T\cdot z_2)
    = \classfunc(w_T\cdot z_2)
    \leq 32\widehat{L}(w_T))
    ,
$$
which implies
\begin{equation} \label{bigw_tz_2}
    w_T\cdot z_2\geq \classfunc^{-1}(32\widehat{L}(w_T)).
\end{equation}
  Therefore,  by combining \cref{bigw_tz_2} with the fact that $w_t(1)\geq 0$,
\begin{align*}
    3\gamma w_T(2)\geq -\frac{w_T(1)}{2}+3\gamma w_T(2)=w_T\cdot z_2 \geq \classfunc^{-1}(32\widehat{L}(w_T)).
\end{align*}
which implies,
$
    w_T(2)
    \geq \frac{1}{3\gamma} \classfunc^{-1}(32\widehat{L}(w_T))
    .
$
By \cref{negative_w_t3},
\begin{align*}
    w_T\cdot z_3= -\frac{1}{8} w_T(2) \leq -\frac{1}{24\gamma} \classfunc^{-1}(32\widehat{L}(w_T)).
\end{align*}
We therefore see that for every $\epsilon$ such that $\epsilon \geq \ifrac!{(\classfunc^{-1}(\epsilon))^2}{\gamma^2T\eta}$, 
\begin{align*}
    \loss(w_T\cdot z_3)
    &\geq \frac{\beta}{2}(w_T\cdot z_3)^2 
    \\
    &\geq \frac{\beta}{2}\left(\frac{1}{24\gamma} \classfunc^{-1}(32\widehat{L}(w_T))\right) ^2
    \\
    &\geq \frac{\beta}{1152\gamma^2} \brk!{ \classfunc^{-1}(32\widehat{L}(w_T)) }^2
    \\
    &\geq \frac{\beta}{1152\gamma^2} \brk!{ \classfunc^{-1}(128\epsilon) }^2,
\end{align*}
where in the final inequality we again used \cref{opt_lower_bigT}.
We conclude the proof using \cref{eq:prA1A1} and the law of total expectation,
\begin{align*}
    \E[L(w_T)]
    = \E[\loss(w_T\cdot z')]
    \geq \E[\loss(w_T\cdot z') \mid A_1\cap A_2] \Pr(A_1 \cap A_2)
    .
\end{align*}
(Expectations here are taken with respect to both the sample $S$ and the validation example $z'$.)
\end{proof}

Second, we show a lower bound for the second expression in the lower bound.  This expression is dominant in the early stages of optimization.

\begin{lemma} \label{lower_smallt}
There exists a constant $C_2$ such that the following holds.
For any tail function $\classfunc$, and for any $n,T$ and $\gamma$, there exist a distribution $\D$ with margin $\gamma$, a loss function $\loss\in\classtail$ such that for GD initialized at $w_1 = 0$ with stepsize $\eta \leq \ifrac{1}{2\beta}$ over an i.i.d.~sample $S=\{z_i\}_{i=1}^n$  from $\D$ and $w_1=0$ holds
\begin{align*}
    \E[L(w_T)] \geq C_2 \frac{(\classfunc^{-1}(8\epsilon))^2}{\gamma^2T\eta},
\end{align*}
for any $\epsilon \geq \tfrac{1}{16}$ such that $\eta\gamma^2T \leq \ifrac!{\classfunc^{-1}(\epsilon)^2}{\epsilon}$. 
\end{lemma}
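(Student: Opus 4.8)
The plan is to mirror the construction behind \cref{lower_bigt}, but to work in the complementary ``small-$T$'' regime and to make the obstructing example occur with \emph{constant} probability rather than with probability $\sim 1/n$. As there, I would take the loss to be $\loss(x)=\classfunc(x)$ on $[0,\infty)$ and its quadratic extension $\classfunc(0)+\classfunc'(0)x+\tfrac{\beta}{2}x^2$ on $(-\infty,0)$, so that $\loss\in\classtail$ (by the analogue of \cref{bigt_char}) and, crucially, $\loss(u)\ge\tfrac{\beta}{2}u^2$ whenever $u<0$. The distribution $\D$ would again be supported on a constant number of points placed so that (i) a single unit vector separates them with margin exactly $\gamma$, and (ii) there is a built-in conflict in one coordinate: driving down the loss on a dominant example pushes a second, constant-probability example $z'$ toward \emph{negative} margin, while the only direction that can repair $z'$ is scaled by $\gamma$ (playing the role of the $3\gamma$ entry of $z_2$ in \cref{lower_bigt}), so that GD can make progress on it only at rate $O(\eta\gamma^2)$ per step. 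The point of constant probability (rather than $\sim 1/n$) is that the $1/n$ factor of \cref{lower_bigt} is replaced by the optimization floor $1/(\eta T)$.

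The core of the argument is then an optimization lower bound for GD on this instance. Writing the iterate through the inner products $m_t^{(i)}:=w_t\cdot z_i$, I would expand $m_{t+1}^{(i)}=m_t^{(i)}-\eta\,\nabla\wh{L}(w_t)\cdot z_i$ using the Gram matrix of the support, and control two quantities at once: the accumulated progress of $w_t$ in the $\gamma$-scaled ``repair'' direction is at most $O(\eta\gamma^2 T)$ times a bounded gradient factor, which shows that after $T$ steps $z'$ still has $w_T\cdot z'\le 0$ with a deficit $|w_T\cdot z'|$ that can shrink only at the worst-case convex rate, i.e.\ that must scale like $\classfunc^{-1}(8\epsilon)/(\gamma\sqrt{\eta T})$. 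Feeding this into $\loss(w_T\cdot z')\ge\tfrac{\beta}{2}(w_T\cdot z')^2$ and multiplying by the constant probability of $z'$ yields $\E[L(w_T)]\ge c\,\beta(\classfunc^{-1}(8\epsilon))^2/(\gamma^2\eta T)$. The remaining bookkeeping I would close exactly as in \cref{lower_bigt}: \cref{opt_error} controls the loss on the dominant examples, the inequality $\loss(x)\le 2\loss(y)+\beta\norm{x-y}^2$ (\cref{bound_smooth}) controls the quadratic terms, and the hypothesis $\eta\gamma^2 T\le(\classfunc^{-1}(\epsilon))^2/\epsilon$ together with the monotonicity of $\classfunc^{-1}$ converts the intermediate $\classfunc^{-1}(\Theta(\wh{L}(w_T)))$ into $\classfunc^{-1}(8\epsilon)$, the constant factor $8$ absorbing the slack.

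The hard part is precisely this optimization lower bound: showing that GD's loss on the conflicting example decays \emph{no faster} than the $1/(\eta T)$ convex rate, uniformly over all tail functions $\classfunc$ and all stepsizes $\eta\le\ifrac{1}{2\beta}$. This is exactly where a naive adaptation of the logistic construction of \citet{shamir2021gradient} degrades — for rapidly decaying tails it loses the $(\classfunc^{-1})^2$ factor, which shows up as spurious logarithmic or polynomial factors in $T$ — so the quantitative content lies in choosing the point masses and the coordinate scales so that the per-step decrease of $\loss(w_t\cdot z')$ is provably small enough to force the $1/\sqrt{\eta T}$ floor on the deficit $|w_T\cdot z'|$, no matter how $\classfunc$ behaves. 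I expect to establish this by a telescoping/potential argument tracking $w_t\cdot z'$ and $\norm{w_t}$ jointly, treating the regimes $w_t\cdot z'\ge 0$ and $w_t\cdot z'<0$ separately and bounding the gradient contributions via $\beta$-smoothness; the delicate point is keeping all constants bounded away from degeneracy for extreme (very slowly or very rapidly decaying) tails, which is what ultimately pins the bound to the clean form $(\classfunc^{-1}(8\epsilon))^2/(\gamma^2\eta T)$.
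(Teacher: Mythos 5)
There is a genuine gap, and it sits exactly at the step you defer to a ``telescoping/potential argument'': the claim that, with an obstructing example $z'$ of \emph{constant} probability and the \emph{quadratic} extension $\classfunc(0)+\classfunc'(0)x+\tfrac{\beta}{2}x^2$, GD still has $w_T\cdot z'\le 0$ after $T$ steps. This claim is not merely unproven---it is false in the regime where the lemma is used. Any such extension satisfies $\loss(x)\ge\classfunc(0)\ge\tfrac12$ for every $x\le 0$ (since $\classfunc'(0)x\ge 0$ there). At the same time, your own bookkeeping invokes \cref{opt_error} to get $\wh{L}(w_T)\le 4\epsilon\le\tfrac14$; this is what allows converting $\classfunc^{-1}\bigl(\Theta(\wh{L}(w_T))\bigr)$ into $\classfunc^{-1}(8\epsilon)$, and it holds precisely when $\eta\gamma^2T\ge(\classfunc^{-1}(\epsilon))^2/\epsilon$ (the direction actually used in the paper's proofs and in \cref{lower}; the printed inequality in the lemma statement is a typo). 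But if $z'$ appears in $S$ with constant frequency $\delta'$ and $w_T\cdot z'\le 0$, then $\wh{L}(w_T)\ge\delta'\loss(w_T\cdot z')\ge\delta'/2$, contradicting $\wh{L}(w_T)\le 4\epsilon$ whenever $\epsilon<\delta'/8$---which covers essentially the whole regime, since $\epsilon$ must be taken down to the fixed point where $\epsilon\approx(\classfunc^{-1}(\epsilon))^2/(\gamma^2\eta T)$. So GD \emph{must} have repaired the margin of $z'$ by time $T$, and once $w_T\cdot z'>0$ nothing bounds $\loss(w_T\cdot z')$ from below. Two further symptoms of the same problem: with the quadratic extension the restoring gradient on $z'$ grows like $\beta|w_t\cdot z'|$, so the ``bounded gradient factor'' in your $O(\eta\gamma^2T)$ accumulation bound does not exist; and your final bound carries a spurious factor of $\beta$ absent from the statement, which makes it strictly weaker than the target when $\beta<1$.

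The paper's construction is engineered to avoid proving any lower bound on GD dynamics, by making two choices opposite to yours. It uses the \emph{linear} extension $\classfunc(0)+\classfunc'(0)x$, so that $\loss$ is $1$-Lipschitz (\cref{smallt_char}), and it gives the bad example $z_2=(-\tfrac12,3\gamma)$ the \emph{tuned, non-constant} probability $p=\classfunc^{-1}(8\epsilon)/(72\gamma^2T\eta)$. Conditioned on $\delta_2\le 2p$ (Markov, probability $\ge\tfrac12$), Lipschitzness gives the trivial deterministic bound $w_T(2)\le 3\gamma\delta_2\eta T\le\classfunc^{-1}(8\epsilon)/(12\gamma)$, while $\wh{L}(w_T)\le 4\epsilon$ forces $w_T(1)\ge\classfunc^{-1}(8\epsilon)$; together these yield $w_T\cdot z_2\le-\tfrac14\classfunc^{-1}(8\epsilon)$, a deficit of \emph{constant} size in $T$ (not $1/\sqrt{\eta T}$). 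The factor $1/(\gamma^2\eta T)$ then enters through the test-time probability of $z_2$ rather than through the deficit:
\begin{align*}
\E[L(w_T)]
\;\ge\; \Pr\bigl(z'=z_2 \wedge \delta_2\le 2p\bigr)\cdot\tfrac18\classfunc^{-1}(8\epsilon)
\;\ge\; \tfrac{p}{2}\cdot\tfrac18\classfunc^{-1}(8\epsilon)
\;=\; \frac{(\classfunc^{-1}(8\epsilon))^2}{1152\,\gamma^2T\eta}.
\end{align*}
In other words, the quadratic dependence on $\classfunc^{-1}(8\epsilon)$ is obtained as (probability $\propto\classfunc^{-1}$) times (linear loss $\propto\classfunc^{-1}$), and the only fact about GD needed is that $T$ steps whose per-step contribution to the second coordinate is at most $3\gamma\delta_2$ in gradient magnitude move it by at most $3\gamma\delta_2\eta T$---exactly what the $1$-Lipschitz extension provides and the quadratic one destroys. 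The quadratic extension is reserved for \cref{lower_bigt}, where the obstructing example is \emph{absent} from the sample, so no restoring gradients ever arise; in your construction the example is present with constant frequency, which closes that escape route. To repair your route you would need to adopt both of the paper's choices (linear tail for $x<0$, and $p\propto\classfunc^{-1}(8\epsilon)/(\gamma^2T\eta)$); as written, the bound is not achievable by your argument.
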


The proof argument is similar to that of \cref{lower_bigt}.
A key difference is that the example that GD classifies incorrectly does appear in the dataset (though rarely).
We define a $1$-Lipschitz loss function $\ell$ that decays to zero at the same rate as $\classfunc$, and a distribution such that there is another possible example $z_1$ and an almost ``opposite'' example $z_2$, that with constant probability, appears limited times in the training samples $S$. 
The lower bound follows from the fact that although $z_2$ appears in the dataset, the gradients of the loss function are sufficiently large so as to make the trained predictor correct on $z_2$.

\begin{proof}
Given $\gamma\leq \frac{1}{8}$ and $\epsilon\leq \frac{1}{16}$, consider the following distribution;
\[\D=
\begin{cases}
      z_1:=(1,0) & \text{with prob.~$1-p$};\\
      z_2:=(-\frac{1}{2},3\gamma) & \text{with prob.~$p$},
    \end{cases}
\]
where $p = \frac{\classfunc^{-1}(8\epsilon)}{72\gamma^2T\eta}$.
Note that the distribution is separable, as for $w^*=\brk{\gamma,\frac{1}{2}}$ it holds that $w^*z_1=w^*z_2=\gamma$.
Further, consider the following loss function;
\[
    \loss(x)=
    \begin{cases}
        \classfunc(x) & \text{if $x\geq 0$;}
        \\
        \classfunc'(0)x+\classfunc(0) & \text{otherwise}.
    \end{cases}
\]
First, \cref{smallt_char} below ensures that $\loss \in \classtail$. 

Next, let $S$ be a sample of $n$ i.i.d.~examples from $\D$. Denote by $\delta_2 \in [0,1]$ the fraction of appearances of $z_2$ in the sample $S$, and by $A_1$ the event that $\delta_2 \leq 2p$.
By Markov's inequality, we have $\Pr(A_1) \geq \tfrac{1}{2}$.
Furthermore, as in the proof of \cref{upper}, there exists a vector $w^*_\epsilon$ which holds $\|w^*_\epsilon\|\leq \frac{\classfunc^{-1}(\epsilon)}{\gamma}$. 
Then by \cref{opt_error} and the choice of $\epsilon$,
\begin{align}
    \label{opt_lower_smallT}
    \wh{L}(w_T)\leq 2\epsilon + \frac{2\classfunc^{-1}(\epsilon)^2}{\gamma^2}\leq 4\epsilon.
\end{align}
Now, we assume that $A_1$ holds.
We know that 
\begin{align} \label{eq:delta2}
    \delta_2
    \leq 2p
    \leq
    \frac{\classfunc^{-1}(8\epsilon)}{36\gamma^2T\eta}
    \leq \epsilon
    \leq \frac{1}{2}
    ,
\end{align}
thus, conditioned on $A_1$ and by \cref{opt_lower_smallT},
\begin{align} \label{eq:4eps}
     4\epsilon
     \geq 
     \widehat{L}(w_T)> (1-\delta_2)\loss(w_T\cdot z_1) 
     \geq 
     \frac{1}{2}\loss(w_T(1))
     .
\end{align}
If $w_T(1)<0$, we get that
\begin{align*}
     4\epsilon> \frac{1}{2}\loss(0)=\frac{1}{2}\classfunc(0)\geq\frac{1}{4}
\end{align*}
which is a contradiction to our assumption that $\epsilon \leq \frac{1}{16}$. 
Then $w_T(1) \geq 0$ and from \cref{eq:4eps},
$
    8\epsilon 
    \geq
    \loss(w_T(1))
    =
    \classfunc(w_T(1)).
$
which implies that
\begin{align} \label{eq:wT1}
    w_T(1)\geq \classfunc^{-1}(8\epsilon)
    .
\end{align}
Now, by the fact that $\classfunc'(0)\leq 1$ it follows that $\loss$ is $1$-Lipschitz. 
Then, from the GD update rule,
\begin{align*}
    w_{t+1}(2)
    =
    w_t(2) - 3\eta \cdot \gamma \delta_2 \loss'(w_t\cdot z_2)
    \leq 
    w_t(2) + 3\gamma\delta_2 \eta
    ,
\end{align*}
from which it follows that
\begin{align} \label{eq:wT2}
    w_{T}(2) \leq 3\gamma\delta_2 \eta T.
\end{align}
From \cref{eq:delta2,eq:wT1,eq:wT2} we now obtain that
\begin{align*}
    w_T\cdot z_2 
    &\leq 9\gamma^2\delta_2 T \eta- \frac{1}{2}\classfunc^{-1}(8\epsilon) 
    \\
    &\leq 9\gamma^2T\eta\frac{\classfunc^{-1}(8\epsilon)}{36\gamma^2T\eta}- \frac{1}{2}\classfunc^{-1}(8\epsilon) 
    \\
    &= -\frac{1}{4}\classfunc^{-1}(8\epsilon) 
    .
\end{align*}
By the fact that $\forall x<0 : \loss(x)\geq -\frac{1}{2}x$, this implies that in the event $A_1$ it holds that:
\begin{align} \label{eq:wTz2}
    \loss(w_T\cdot z_2)
    \geq 
    -\frac{1}{2}w_T\cdot z_2 
    \geq 
    \frac{1}{8}\classfunc^{-1}(8\epsilon) 
    .
\end{align}
Finally, for a new validation example $z' \sim \D$ (independent from the sample $S$), 
\begin{align} \label{eq:Prz2A1}
    \Pr(\{z'=z_2\}\cap A_1)
    = \Pr(z'=z_2 \mid A_1) \Pr(A_1)
    \geq \frac{1}{2}P(z'=z_2)
    = \frac{1}{2} p
    \geq \frac{\classfunc^{-1}(8\epsilon)}{144\gamma^2T\eta}
    .
\end{align}
To conclude, from \cref{eq:wTz2,eq:Prz2A1} we have
\begin{align*}
    \E[\loss(w_Tz')]
    &\geq \E[ \loss(w_Tz') \mid \{z'=z_2\}\cap A_1] \Pr(\{z'=z_2\}\cap A_1) \\
    &\geq \frac{\classfunc^{-1}(8\epsilon)}{144\gamma^2T\eta} \cdot \frac{1}{8}\classfunc^{-1}\left(8\epsilon\right) \\&= \frac{\classfunc^{-1}(8\epsilon)^2}{1152\gamma^2T\eta}
    .%
    \end{align*}
\end{proof}

\cref{lower} now follows directly from \cref{lower_bigt,lower_smallt}:

\begin{proof}[of \cref{lower}]
    Let $C=\frac{1}{2}\min\brk[c]{ C_1,C_2 }$, where $C_1$ and $C_2$ are the constants from \cref{lower_bigt,lower_smallt}, respectively.
    If $\ifrac!{(\classfunc^{-1}(8\epsilon))^2}{\gamma^2T\eta}\geq \ifrac!{\beta(\classfunc^{-1}(128\epsilon))^2}{\gamma^2n}$, the theorem follows from \cref{lower_smallt}; otherwise, it follows from \cref{lower_bigt}.
\end{proof}

\subsection*{Acknowledgments}
This work has received support from the Israeli Science Foundation (ISF) grant no. 2549/19, the Len Blavatnik and the Blavatnik Family Foundation and the Yandex Initiative in Machine Learning.
\bibliography{sample.bib}

\begin{thebibliography}{16}
\providecommand{\natexlab}[1]{#1}
\providecommand{\url}[1]{\texttt{#1}}
\expandafter\ifx\csname urlstyle\endcsname\relax
  \providecommand{\doi}[1]{doi: #1}\else
  \providecommand{\doi}{doi: \begingroup \urlstyle{rm}\Url}\fi

\bibitem[Agarwal et~al.(2014)Agarwal, Hsu, Kale, Langford, Li, and
  Schapire]{agarwal2014taming}
A.~Agarwal, D.~Hsu, S.~Kale, J.~Langford, L.~Li, and R.~Schapire.
\newblock Taming the monster: A fast and simple algorithm for contextual
  bandits.
\newblock In \emph{International Conference on Machine Learning}, pages
  1638--1646. PMLR, 2014.

\bibitem[Bartlett and Mendelson(2002)]{bartlett2002rademacher}
P.~L. Bartlett and S.~Mendelson.
\newblock Rademacher and gaussian complexities: Risk bounds and structural
  results.
\newblock \emph{Journal of Machine Learning Research}, 3\penalty0
  (Nov):\penalty0 463--482, 2002.

\bibitem[Ji and Telgarsky(2018)]{ji2018risk}
Z.~Ji and M.~Telgarsky.
\newblock Risk and parameter convergence of logistic regression.
\newblock \emph{arXiv preprint arXiv:1803.07300}, 2018.

\bibitem[Ji and Telgarsky(2019)]{ji2019refined}
Z.~Ji and M.~Telgarsky.
\newblock A refined primal-dual analysis of the implicit bias.
\newblock \emph{Journal of Environmental Sciences (China) English Ed}, 2019.

\bibitem[Ji et~al.(2020)Ji, Dud{\'i}k, Schapire, and Telgarsky]{ji2020regpath}
Z.~Ji, M.~Dud{\'i}k, R.~E. Schapire, and M.~Telgarsky.
\newblock Gradient descent follows the regularization path for general losses.
\newblock In J.~Abernethy and S.~Agarwal, editors, \emph{Proceedings of Thirty
  Third Conference on Learning Theory}, volume 125 of \emph{Proceedings of
  Machine Learning Research}, pages 2109--2136. PMLR, 09--12 Jul 2020.

\bibitem[Kakade et~al.(2008)Kakade, Sridharan, and Tewari]{kakade_ball}
S.~M. Kakade, K.~Sridharan, and A.~Tewari.
\newblock On the complexity of linear prediction: Risk bounds, margin bounds,
  and regularization.
\newblock In D.~Koller, D.~Schuurmans, Y.~Bengio, and L.~Bottou, editors,
  \emph{Advances in Neural Information Processing Systems}, volume~21. Curran
  Associates, Inc., 2008.

\bibitem[Lei and Ying(2020)]{lei2020fine}
Y.~Lei and Y.~Ying.
\newblock Fine-grained analysis of stability and generalization for stochastic
  gradient descent.
\newblock In \emph{International Conference on Machine Learning}, pages
  5809--5819. PMLR, 2020.

\bibitem[Nacson et~al.(2019{\natexlab{a}})Nacson, Lee, Gunasekar, Savarese,
  Srebro, and Soudry]{nacson2019convergence}
M.~S. Nacson, J.~Lee, S.~Gunasekar, P.~H.~P. Savarese, N.~Srebro, and
  D.~Soudry.
\newblock Convergence of gradient descent on separable data.
\newblock In \emph{The 22nd International Conference on Artificial Intelligence
  and Statistics}, pages 3420--3428. PMLR, 2019{\natexlab{a}}.

\bibitem[Nacson et~al.(2019{\natexlab{b}})Nacson, Srebro, and
  Soudry]{nacson2019stochastic}
M.~S. Nacson, N.~Srebro, and D.~Soudry.
\newblock Stochastic gradient descent on separable data: Exact convergence with
  a fixed learning rate.
\newblock In \emph{The 22nd International Conference on Artificial Intelligence
  and Statistics}, pages 3051--3059. PMLR, 2019{\natexlab{b}}.

\bibitem[Needell et~al.(2014)Needell, Ward, and Srebro]{needle}
D.~Needell, R.~Ward, and N.~Srebro.
\newblock Stochastic gradient descent, weighted sampling, and the randomized
  kaczmarz algorithm.
\newblock In Z.~Ghahramani, M.~Welling, C.~Cortes, N.~Lawrence, and
  K.~Weinberger, editors, \emph{Advances in Neural Information Processing
  Systems}, volume~27. Curran Associates, Inc., 2014.

\bibitem[Nesterov(2003)]{nesterov2003introductory}
Y.~Nesterov.
\newblock \emph{Introductory lectures on convex optimization: A basic course},
  volume~87.
\newblock Springer Science \& Business Media, 2003.

\bibitem[Schliserman and Koren(2022)]{schliserman22a}
M.~Schliserman and T.~Koren.
\newblock Stability vs implicit bias of gradient methods on separable data and
  beyond.
\newblock In P.-L. Loh and M.~Raginsky, editors, \emph{Proceedings of Thirty
  Fifth Conference on Learning Theory}, volume 178 of \emph{Proceedings of
  Machine Learning Research}, pages 3380--3394. PMLR, 02--05 Jul 2022.

\bibitem[Shamir(2021)]{shamir2021gradient}
O.~Shamir.
\newblock Gradient methods never overfit on separable data.
\newblock \emph{The Journal of Machine Learning Research}, 22\penalty0
  (1):\penalty0 3847--3866, 2021.

\bibitem[Soudry et~al.(2018)Soudry, Hoffer, Nacson, Gunasekar, and
  Srebro]{soudry2018implicit}
D.~Soudry, E.~Hoffer, M.~S. Nacson, S.~Gunasekar, and N.~Srebro.
\newblock The implicit bias of gradient descent on separable data.
\newblock \emph{The Journal of Machine Learning Research}, 19\penalty0
  (1):\penalty0 2822--2878, 2018.

\bibitem[Srebro et~al.(2010)Srebro, Sridharan, and Tewari]{srebrosmooth}
N.~Srebro, K.~Sridharan, and A.~Tewari.
\newblock Smoothness, low noise and fast rates.
\newblock In J.~Lafferty, C.~Williams, J.~Shawe-Taylor, R.~Zemel, and
  A.~Culotta, editors, \emph{Advances in Neural Information Processing
  Systems}, volume~23. Curran Associates, Inc., 2010.

\bibitem[Telgarsky(2022)]{telgarsky22a}
M.~Telgarsky.
\newblock Stochastic linear optimization never overfits with
  quadratically-bounded losses on general data.
\newblock In P.-L. Loh and M.~Raginsky, editors, \emph{Proceedings of Thirty
  Fifth Conference on Learning Theory}, volume 178 of \emph{Proceedings of
  Machine Learning Research}, pages 5453--5488. PMLR, 02--05 Jul 2022.

\end{thebibliography}

\newpage
\appendix
\section{Upper bound for Stochastic Gradient Descent With Replacement}
\label{app_SGD}

We also show generalization bound for Stochastic Gradient Descent (SGD) which is a randomized algorithm which achieves low optimization error in high probability.

Given a dataset $S=\{z_1,...z_n\}$, we define $\loss_i(w)=\loss(w\cdot z_i)$.
Then, SGD with replacement is
initialized at a point $w_1=0$ and at each step $t=1,\ldots,T$, samples
randomly an index $i_t\in[n]$ and performs an update
\begin{equation}
\label{sgd_update_rule}
    w_{t+1} = w_t - \eta \nabla \loss_{i_{t}}(w_t),
\end{equation}
where $\eta>0$ is the step size of the algorithm. We consider a standard variant of SGD that returns the average iterate, namely $\overline{w}_T=\frac{1}{T}\sum_{t=1}^{T}w_t$.
We start with showing a deterministic bound on the norm of the iterate of the algorithm. Then, we show high probability bound for the empirical risk $\wh{L}$.  Then, the proof  of the risk bound is identical to the proof of the risk bound for GD (see \cref{upper}). As a result, we will not show the full proof, but only the bounds on the norm (see \cref{norm_sgd}) and the empirical risk (see \cref{opt_error_sgd}).

\begin{lemma}
\label{norm_sgd}
Let $T$. Let $\classfunc$ be a tail function and let $\loss\in \classtail$. Assume that for every $\epsilon>0$ there exists a point $w^*_\epsilon$ such that for every $i$, $\loss_i(w^*_\epsilon) \leq \epsilon$. Then, the output of SGD with replacement, applied on $\widehat{L}$ with step size
$\eta \leq \frac{1}{\beta}$ initialized at $w_1=0$ has,
 \begin{align*}
    \|\bar{w}_{T}\|\leq 2\|w^*_\epsilon\|+2\sqrt{\eta \epsilon T}.
\end{align*}
\end{lemma}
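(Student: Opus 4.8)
The plan is to mirror the proof of \cref{norm_gd}, replacing the full gradient $\nabla\wh L(w_t)$ by the stochastic gradient $\nabla\loss_{i_t}(w_t)$ and checking that the same one-step recursion on $\|w_t-w^*_\epsilon\|^2$ survives. The key observation is that this will be a \emph{deterministic} (pathwise) bound that holds for every realization of the sampled indices $i_1,\dots,i_T$, precisely because the hypothesis controls every component simultaneously: for each $i$ we have $\loss_i(w^*_\epsilon)\le\epsilon$. First I would record the per-example self-bounding property: since $\loss$ is $\beta$-smooth and $\|z_i\|\le 1$, each $\loss_i(w)=\loss(w\cdot z_i)$ is $\beta$-smooth, and hence $\|\nabla\loss_i(w)\|^2\le 2\beta\loss_i(w)$ by the same argument as \cref{lem:2L_serbro}, now applied componentwise.

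Expanding the SGD update exactly as in \cref{norm_gd} gives
\begin{align*}
    \|w_{t+1}-w^*_\epsilon\|^2
    = \|w_t-w^*_\epsilon\|^2
    - 2\eta\langle w_t-w^*_\epsilon,\nabla\loss_{i_t}(w_t)\rangle
    + \eta^2\|\nabla\loss_{i_t}(w_t)\|^2.
\end{align*}
Convexity of $\loss_{i_t}$ yields $\langle w_t-w^*_\epsilon,\nabla\loss_{i_t}(w_t)\rangle\ge \loss_{i_t}(w_t)-\loss_{i_t}(w^*_\epsilon)\ge \loss_{i_t}(w_t)-\epsilon$, using the per-example assumption. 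Combining this with the self-bounding estimate on the last term and $\eta\le\ifrac{1}{\beta}$,
\begin{align*}
    \|w_{t+1}-w^*_\epsilon\|^2
    \le \|w_t-w^*_\epsilon\|^2 + 2\eta\epsilon - 2\eta(1-\beta\eta)\loss_{i_t}(w_t)
    \le \|w_t-w^*_\epsilon\|^2 + 2\eta\epsilon.
\end{align*}
This is the identical recursion to that of \cref{norm_gd}, but it now holds for \emph{every} $t$ regardless of which index $i_t$ was drawn, so no expectation is needed.

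Summing from $1$ to $t-1$ and using $w_1=0$ gives $\|w_t-w^*_\epsilon\|^2\le \|w^*_\epsilon\|^2+2\eta\epsilon T$ for every $t\le T$; then by the triangle inequality and $\sqrt{x+y}\le\sqrt{x}+\sqrt{y}$ each iterate obeys $\|w_t\|\le 2\|w^*_\epsilon\|+\sqrt{2\eta\epsilon T}$. Finally, convexity of the norm (triangle inequality applied to the average) gives $\|\bar w_T\|=\|\tfrac1T\sum_{t=1}^T w_t\|\le \tfrac1T\sum_{t=1}^T\|w_t\|\le 2\|w^*_\epsilon\|+\sqrt{2\eta\epsilon T}\le 2\|w^*_\epsilon\|+2\sqrt{\eta\epsilon T}$, which is the claim.

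The only genuine difference from the GD argument — and hence the main point to be careful about — is that the returned iterate is the \emph{average} $\bar w_T$, so I must bound \emph{all} iterates $w_1,\dots,w_T$ uniformly rather than just the final one. This is handled for free, since the recursion is monotone in $t$ and thus the same radius bounds every $w_t$. The conceptual subtlety (rather than a real obstacle) is recognizing that the per-example realizability assumption is exactly what lets the recursion go through pathwise, so that the resulting norm bound is deterministic and identical in form to the GD bound, despite SGD's randomness.
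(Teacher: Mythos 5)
Your proposal is correct and follows essentially the same route as the paper's proof: the per-example self-bounding inequality $\|\nabla\loss_{i_t}(w)\|^2\le 2\beta\loss_{i_t}(w)$, convexity together with the per-example assumption $\loss_{i_t}(w^*_\epsilon)\le\epsilon$ to get the deterministic recursion $\|w_{t+1}-w^*_\epsilon\|^2\le\|w_t-w^*_\epsilon\|^2+2\eta\epsilon$, then summing, taking square roots, and applying the triangle inequality to the average. If anything, you state more carefully than the paper does that the recursion must be invoked for \emph{every} $t\le T$ (not just $t=T$) so that all iterates, and hence their average, lie in the same ball.
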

\begin{proof}
 By \cref{lem:2L_serbro} (see \cref{app_upper}), we know that for every $w$, $\|\nabla
\loss_{i_{t}}(w)\|^2 \leq 2\beta\loss_{i_{t}}(w)$. Therefore, by using $\eta \leq \frac{1}{\beta}$, for every $\epsilon$,
\begin{align*}
    \|w_{t+1}-w^*_\epsilon\|^2
    &=
    \|w_t-\eta\nabla \loss_{i_{t}}(w_t)- w^*_\epsilon\|^2\\
    &=
 \|w_{t}-w^*_\epsilon\|^2-2\eta\langle w_{t}-w^*_\epsilon, \nabla \loss_{i_{t}}\rangle+
\eta^2\|\nabla \loss_{i_{t}}(w_t)\|^2\\&\leq
 \|w_{t}-w^*_\epsilon\|^2+2\eta \loss_{i_{t}}(w^*_\epsilon)-2\eta \loss_{i_{t}}(w^*_\epsilon)+
2\beta\eta^2\loss_{i_{t}}(w_t)\\&\leq 
 \|w_{t}-w^*_\epsilon\|^2+2\eta \loss_{i_{t}}(w^*_\epsilon) 
 \\&\leq 
 \|w_{t}-w^*_\epsilon\|^2+2\eta \epsilon.
\end{align*}
By summing until time $T$,
\begin{align*}
    \|w_{T}-w^*_\epsilon\|^2&\leq \|w_1-w^*_\epsilon\|^2+2T\eta \epsilon 
    \\&=\|w^*_\epsilon\|^2+2\eta \epsilon T.
\end{align*}
By taking a square root, using the fact that $\forall x,y\geq 0 \ \sqrt{x+y}\leq \sqrt{x}+\sqrt{y}$,
\begin{align}
\label{distanceopt_sgd}
    \|w_{T}-w^*_\epsilon\|\leq\|w^*_\epsilon\|+2\sqrt{\eta \epsilon T}.
\end{align}
and by using triangle inequality,
\begin{align*}
    \|w_{T}\|
    =
    \|w_{T}-w^*_\epsilon\|+\|w^*_\epsilon\|\leq 2\|w^*_\epsilon\|+2\sqrt{\eta \epsilon T}.
\end{align*}
and finally by another use of traingle inequality,
\begin{align*}
    \|\bar{w}_{T}\|
    \leq \frac{1}{T}\sum_{t=1}^T\|w_{T}\|\leq 2\|w^*_\epsilon\|+2\sqrt{\eta \epsilon T}.
\end{align*}
\end{proof}
Now, we prove the next lemma and bound the regret of $SGD$ with replacement.
\begin{lemma}
\label{regret_sgd}
    Let $T$. Let $\classfunc$ be a tail function and let $\loss\in \classtail$. Then, the iterate of SGD with replacement, applied on $\widehat{L}$ with step size
$\eta \leq \frac{1}{2\beta}$ initialized at $w_1=0$ has, for every $w\in R^d$,
\begin{align*}
    \frac{1}{T}\sum_{t=1}^T\loss(w_t\cdot z_{i_{t}})-\frac{1}{T}\sum_{t=1}^T\loss(w\cdot z_{i_{t}})\leq \frac{\|w\|^2}{2\eta T}.
\end{align*}
\end{lemma}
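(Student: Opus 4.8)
The plan is to run the standard online gradient descent telescoping argument, but to control the squared-gradient term using $\beta$-smoothness rather than any Lipschitz bound on $\loss$. Fix an arbitrary $w \in \R^d$ and write $g_t = \nabla \loss_{i_t}(w_t)$, so that the SGD update \cref{sgd_update_rule} reads $w_{t+1} = w_t - \eta g_t$. Expanding $\norm{w_{t+1}-w}^2$ yields the exact identity
\[
    \langle g_t,\, w_t - w\rangle
    = \frac{1}{2\eta}\left(\norm{w_t-w}^2 - \norm{w_{t+1}-w}^2\right) + \frac{\eta}{2}\norm{g_t}^2,
\]
and convexity of $\loss_{i_t}$ lower-bounds the left-hand side by $\loss_{i_t}(w_t) - \loss_{i_t}(w)$. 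Summing over $t=1,\dots,T$ the distance terms telescope; since $w_1 = 0$ and $\norm{w_{T+1}-w}^2 \ge 0$, this already gives the loose bound
\[
    \sum_{t=1}^T \left(\loss_{i_t}(w_t) - \loss_{i_t}(w)\right)
    \le \frac{\norm{w}^2}{2\eta} + \frac{\eta}{2}\sum_{t=1}^T \norm{g_t}^2 .
\]

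The crux is to dispose of the residual term $\tfrac{\eta}{2}\sum_t \norm{g_t}^2$ without assuming $\loss$ is Lipschitz; this is precisely where $\beta$-smoothness enters and where the step-size restriction $\eta \le \tfrac{1}{2\beta}$ is used. I would bound it via the descent inequality that $\beta$-smoothness of $\loss_{i_t}$ yields along the step $w_{t+1} = w_t - \eta g_t$: expanding the smoothness upper bound gives $\loss_{i_t}(w_{t+1}) \le \loss_{i_t}(w_t) - \eta\bigl(1-\tfrac{\beta\eta}{2}\bigr)\norm{g_t}^2$, and $\eta \le \tfrac{1}{2\beta}$ makes the prefactor at least $\tfrac{1}{2}$, so $\tfrac{\eta}{2}\norm{g_t}^2 \le \loss_{i_t}(w_t) - \loss_{i_t}(w_{t+1})$. (This is the same smoothness fact underlying the self-bounding estimate $\norm{g_t}^2 \le 2\beta \loss_{i_t}(w_t)$ of \cref{lem:2L_serbro} that was used in \cref{norm_sgd}.) Charging each squared-gradient term against this per-step loss decrease is exactly what collapses the loose online bound into the stated $\tfrac{\norm{w}^2}{2\eta T}$ rate after dividing by $T$.

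I expect the main — and essentially only — delicate point to be the bookkeeping of this absorption step: charging $\tfrac{\eta}{2}\norm{g_t}^2$ against the loss decrease naturally produces the post-update value $\loss_{i_t}(w_{t+1})$ in the telescoped sum, so some care is needed to line this up with the iterate indexing in the statement. Everything else is the routine quadratic expansion, convexity, and telescoping already carried out in \cref{norm_sgd}. Finally, this regret estimate is the one ingredient still missing for the empirical-risk guarantee: combined with the norm bound of \cref{norm_sgd} and convexity of $\loss$ applied to the averaged iterate $\bar w_T$, it yields a bound on $\wh L(\bar w_T)$, after which the population-risk argument proceeds verbatim as in the proof of \cref{upper}.
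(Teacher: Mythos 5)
Your two ingredients are individually correct, but combining them produces a vacuous cancellation, and the problem sits exactly at the point you defer as ``bookkeeping.'' From your telescoped regret bound and your descent-inequality absorption $\frac{\eta}{2}\|g_t\|^2 \le \loss_{i_t}(w_t)-\loss_{i_t}(w_{t+1})$ you obtain
\begin{align*}
\sum_{t=1}^T\bigl(\loss_{i_t}(w_t)-\loss_{i_t}(w)\bigr)
\;\le\;
\frac{\|w\|^2}{2\eta}+\sum_{t=1}^T\bigl(\loss_{i_t}(w_t)-\loss_{i_t}(w_{t+1})\bigr),
\end{align*}
and the sum $\sum_t \loss_{i_t}(w_t)$ appears with coefficient $+1$ on both sides, so it cancels identically, leaving only $\sum_{t}\loss_{i_t}(w_{t+1})\le \frac{\|w\|^2}{2\eta}+\sum_t\loss_{i_t}(w)$. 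This is a regret bound for the \emph{post-update} losses $\loss_{i_t}(w_{t+1})$, and it cannot be converted into the lemma: smoothness gives $\loss_{i_t}(w_{t+1})\le\loss_{i_t}(w_t)$, which is the wrong direction (the step at time $t$ is a gradient step on the very example $z_{i_t}$, so its post-update loss can be far smaller than $\loss_{i_t}(w_t)$). The distinction is not cosmetic for this paper: in \cref{opt_error_sgd} the lemma is combined with a martingale argument whose increments $Z_t$ have conditional mean zero only because $w_t$ is independent of $i_t$; the iterate $w_{t+1}$ is not independent of $i_t$, so a bound indexed by $w_{t+1}$ is useless there.

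The repair is the absorption the paper actually uses, via the self-bounding property that you cite in passing but never invoke: by \cref{lem:2L_serbro}, $\|g_t\|^2\le 2\beta\loss_{i_t}(w_t)$, hence with $\eta\le\frac{1}{2\beta}$ one has $\frac{\eta}{2}\|g_t\|^2\le \beta\eta\,\loss_{i_t}(w_t)\le\frac{1}{2}\loss_{i_t}(w_t)$. This absorption is multiplicative---it consumes only half of each term $\loss_{i_t}(w_t)$ rather than all of it---so the regret bound collapses to $\frac{1}{2}\sum_t\loss_{i_t}(w_t)\le \frac{\|w\|^2}{2\eta}+\sum_t\loss_{i_t}(w)$, i.e.
\begin{align*}
\frac{1}{T}\sum_{t=1}^T\loss(w_t\cdot z_{i_t})
\;\le\;
\frac{\|w\|^2}{\eta T}+\frac{2}{T}\sum_{t=1}^T\loss(w\cdot z_{i_t}).
\end{align*}
Note that this is weaker than the lemma as stated (a factor $2$ on the comparator term, and $\|w\|^2/\eta T$ in place of $\|w\|^2/2\eta T$). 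In fact the stated constants appear too strong to be provable at all: already at $T=1$, where the claim reads $\loss(0)-\loss(w\cdot z_{i_1})\le\|w\|^2/2\eta$, taking $\classfunc(x)=e^{-x}$, $\beta=1$, $\eta=\tfrac12$ and $w=\tfrac12 z_{i_1}$ gives $1-e^{-1/2}\approx 0.39 > 0.25$; correspondingly, the paper's own displayed chain discards the nonnegative term $2\eta^2\beta\,\loss(w_j\cdot z_{i_j})$ as if it were nonpositive. The factor-$2$ version above is what the application in \cref{opt_error_sgd} actually uses, so this is only a constant-factor correction---but your route, as proposed, does not reach any nontrivial version of the bound without switching the absorption mechanism from the descent inequality to the self-bounding estimate.
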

\begin{proof}
The proof is almost the same as of \cite{schliserman22a}.
For every $w$, iteration $j$ and possible $i_j$, by \cref{lem:2L_serbro} and convexity, 
\begin{align*}
    \|w_{j+1}-w\|^2&\leq \|w_{j}-w\|^2 -2\eta \langle \nabla\loss_{i_{j}}(w_j)(w_j-w)\rangle +\eta^2\|\nabla\loss_{i_{j}}(w_j)\|^2
    \\&\leq 
    \|w_{j}-w\|^2 +2\eta  \loss(w\cdot z_{i_{j}})-2\eta \loss(w_j\cdot z_{i_{j}}) +2\eta^2L \loss(w_j\cdot z_{i_{j}})
    \\&\leq 
    \|w_{j}-w\|^2 +2\eta  \loss(w\cdot z_{i_{j}})- 2\eta  \loss (w_j\cdot z_{i_{j}})
    .
\end{align*}

Taking average on $j=1...{T}$, we get
\begin{align*}
    \frac{1}{T}\sum_{t=1}^T\loss(w_t\cdot z_{i_{t}})-\frac{1}{T}\sum_{t=1}^T\loss(w\cdot z_{i_{t}})\leq \frac{\|w\|^2}{2\eta T}.
\end{align*}
\end{proof}
We use the following concentration bound, taken from  \cite{agarwal2014taming}, Lemma 9.
\begin{lemma}\label{bagel}
Suppose $Z_1,...Z_T$ is a sequence such that for every $t\leq T$, $\E\left(Z_t|Z_1...Z_{t-1}\right)=0$. and let $Z=\sum_{t=1}^T Z_t$. Assume that $|Z_t|\leq b$ for all $t$, and define $V=\sum_{t=1}^T \E\left[Z_t^2 \mid \mathcal{F}_{t-1}\right]$. Then, for any $\delta>0$ and $\lambda\in[0,\ifrac{1}{b}]$ , with probability of $1-\delta$,
\begin{align*}
    Z\leq \lambda V  + \frac{1}{\lambda}\log\frac{1}{\delta}
    .
\end{align*}
\end{lemma}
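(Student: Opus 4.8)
The plan is to prove this Bernstein-type martingale tail bound via the standard exponential (Chernoff) moment method, adapted to the martingale-difference structure. First I would fix the parameter $\lambda \in [0,1/b]$ and form the exponential process $M_t = \exp\brk{\lambda \sum_{s=1}^t Z_s - \lambda^2 \sum_{s=1}^t \E[Z_s^2 \mid \mathcal{F}_{s-1}]}$, with $M_0 = 1$. The goal is to show that $(M_t)_{t=0}^T$ is a supermartingale with respect to the filtration $(\mathcal{F}_t)$, so that $\E[M_T] \le 1$, and then to convert this into a high-probability statement by Markov's inequality.

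The key estimate that drives the supermartingale property is the elementary inequality $e^y \le 1 + y + y^2$, valid for all $|y| \le 1$. Since $\lambda \le 1/b$ and $|Z_t| \le b$ we have $|\lambda Z_t| \le 1$, so this inequality applies with $y = \lambda Z_t$. Taking the conditional expectation given $\mathcal{F}_{t-1}$ and using $\E[Z_t \mid \mathcal{F}_{t-1}] = 0$ gives $\E[e^{\lambda Z_t} \mid \mathcal{F}_{t-1}] \le 1 + \lambda^2 \E[Z_t^2 \mid \mathcal{F}_{t-1}] \le \exp\brk{\lambda^2 \E[Z_t^2 \mid \mathcal{F}_{t-1}]}$, where the last step uses $1 + x \le e^x$. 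Writing $M_t = M_{t-1}\exp\brk{\lambda Z_t - \lambda^2 \E[Z_t^2 \mid \mathcal{F}_{t-1}]}$ and noting that $M_{t-1}$ is $\mathcal{F}_{t-1}$-measurable, this bound yields $\E[M_t \mid \mathcal{F}_{t-1}] \le M_{t-1}$, the desired supermartingale inequality. Iterating the tower property then gives $\E[M_T] \le M_0 = 1$.

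Finally I would apply Markov's inequality to the nonnegative random variable $M_T$: $\Pr(M_T \ge 1/\delta) \le \delta\,\E[M_T] \le \delta$. On the complementary event, of probability at least $1-\delta$, we have $M_T < 1/\delta$, i.e.\ $\lambda Z - \lambda^2 V < \log\frac{1}{\delta}$; dividing by $\lambda > 0$ and rearranging yields $Z \le \lambda V + \frac{1}{\lambda}\log\frac{1}{\delta}$, as claimed.

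The main obstacle is the clean derivation and correct use of the quadratic exponential bound: one must ensure that the hypothesis $\lambda \le 1/b$ is precisely what guarantees $|\lambda Z_t| \le 1$, so that the inequality $e^y \le 1 + y + y^2$ can be invoked with the coefficient $1$ on $y^2$ — any looser bound would alter the coefficient of $V$ in the conclusion. Everything else (the supermartingale bookkeeping and the closing Markov step) is routine.
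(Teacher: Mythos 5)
Your proof is correct. Note, however, that the paper itself does not prove this lemma at all: it is imported as a black box from an external reference (Lemma 9 of \citealp{agarwal2014taming}), so there is no in-paper argument to compare against. Your derivation is the standard Freedman-style exponential supermartingale argument, and it is exactly the kind of proof that underlies the cited result: the inequality $e^y \le 1+y+y^2$ does hold whenever $y \le 1$ (one can check that $g(y) = 1+y+y^2-e^y$ satisfies $g(0)=g'(0)=0$ and is convex on $(-\infty,\ln 2)$, and is concave with positive endpoint values on $[\ln 2, 1]$), the condition $\lambda \le 1/b$ together with $|Z_t|\le b$ is precisely what licenses its use, the predictable term $\lambda^2\sum_{s\le t}\E[Z_s^2 \mid \mathcal{F}_{s-1}]$ is $\mathcal{F}_{t-1}$-measurable so the supermartingale step is sound, and the Markov/Chernoff conversion is routine. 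Two cosmetic remarks: your final division by $\lambda$ requires $\lambda > 0$, so the endpoint $\lambda = 0$ of the stated range should be handled by noting the bound is vacuous there; and using the sharper inequality $e^y \le 1+y+(e-2)y^2$ for $y\le 1$ would give the coefficient $(e-2)\lambda$ on $V$, recovering Freedman's inequality with a better constant, though the constant $1$ is all the lemma claims and all the paper's application needs.
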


Then, we can get high probability guarantee for SGD with replacement. 
\begin{lemma}
\label{opt_error_sgd}
Let $T$ and $\delta>0$. Let $\classfunc$ be a tail function and let $\loss\in \classtail$. Assume that for every $\epsilon>0$ there exists a point $w^*_\epsilon$ such such that for every $i$, $\loss_i(w^*_\epsilon) \leq \epsilon$. Then, the output of SGD with replacement, applied on $\widehat{L}$ with step size
$\eta \leq \frac{1}{\beta}$ initialized at $w_1=0$ has, with probability of $1-\delta$,
\begin{align*}
     \widehat{L}\left(\bar{w}_T\right)&\leq \frac{\|w^*_\epsilon\|^2}{\eta T}+
    3\epsilon + \frac{8\left(3\epsilon + 16\beta \|w^*_\epsilon\|^2+16\eta \epsilon T\right)}{T}\log\left(\frac{1}{\delta}\right).
\end{align*}
\end{lemma}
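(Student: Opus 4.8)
The plan is to bound $\wh L(\bar w_T)$ by passing through the \emph{online} losses $\loss(w_t\cdot z_{i_t})$ that SGD actually observes, and then to control the discrepancy between these online losses and the full empirical risks $\wh L(w_t)$ using the martingale concentration inequality of \cref{bagel}. First, since $\wh L$ is convex (an average of convex losses), Jensen's inequality gives $\wh L(\bar w_T)\le \frac1T\sum_{t=1}^T \wh L(w_t)$, so it suffices to bound the cumulative empirical risk $S_T := \sum_{t=1}^T \wh L(w_t)$. Writing $Z_t := \wh L(w_t)-\loss(w_t\cdot z_{i_t})$ and observing that $w_t$ is determined by the history $\mathcal{F}_{t-1}$ while $i_t$ is sampled uniformly and independently, we have $\E[Z_t\mid\mathcal{F}_{t-1}]=0$; hence $\{Z_t\}$ is a martingale difference sequence and $S_T=\sum_{t=1}^T\loss(w_t\cdot z_{i_t})+\sum_{t=1}^T Z_t$.

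The online sum I would control directly with the regret bound of \cref{regret_sgd}, instantiated at the comparator $w=w^*_\epsilon$: since $\loss(w^*_\epsilon\cdot z_{i_t})=\loss_{i_t}(w^*_\epsilon)\le\epsilon$ by hypothesis, this yields $\sum_{t=1}^T\loss(w_t\cdot z_{i_t})\le T\epsilon+\frac{\|w^*_\epsilon\|^2}{2\eta}$. The remaining term $\sum_t Z_t$ is where \cref{bagel} enters, and this requires two ingredients: a uniform range bound $|Z_t|\le b$ and a bound on the conditional variance $V=\sum_{t}\E[Z_t^2\mid\mathcal{F}_{t-1}]$. For the range, $|Z_t|$ is dominated by $\max_i\loss(w_t\cdot z_i)$, which I would bound by combining the quadratic-growth estimate $\loss(w_t\cdot z_i)\le 2\loss(w^*_\epsilon\cdot z_i)+\beta\|w_t-w^*_\epsilon\|^2$ (from \cref{bound_smooth} together with $\|z_i\|\le 1$ and $\loss(w^*_\epsilon\cdot z_i)\le\epsilon$) with the per-iterate distance estimate $\|w_t-w^*_\epsilon\|^2\le\|w^*_\epsilon\|^2+2\eta\epsilon T$ already established inside the proof of \cref{norm_sgd}; this produces a value of $b$ of order $\epsilon+\beta\|w^*_\epsilon\|^2+\eta\epsilon T$. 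The crucial ingredient is the variance, which I would bound in a \emph{self-bounding} fashion: using $\E[Z_t^2\mid\mathcal{F}_{t-1}]\le\E[\loss(w_t\cdot z_{i_t})^2\mid\mathcal{F}_{t-1}]\le(\max_i\loss(w_t\cdot z_i))\,\wh L(w_t)\le b\,\wh L(w_t)$ gives $V\le b\,S_T$.

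The main obstacle is that this variance bound couples back to the very quantity $S_T$ that we are trying to bound, so \cref{bagel} does not immediately yield a closed-form estimate but rather a self-referential inequality. Concretely, with probability $1-\delta$ we obtain $S_T\le T\epsilon+\frac{\|w^*_\epsilon\|^2}{2\eta}+\lambda b\,S_T+\frac1\lambda\log\frac1\delta$ for any $\lambda\in[0,\frac1b]$. I would then choose $\lambda=\frac{1}{2b}$ (which respects the constraint $\lambda\le\frac1b$), so that the variance term becomes $\frac12 S_T$ and can be absorbed into the left-hand side; rearranging gives $S_T\le 2T\epsilon+\frac{\|w^*_\epsilon\|^2}{\eta}+4b\log\frac1\delta$. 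Dividing by $T$ and substituting the value of $b$ recovers the stated bound on $\wh L(\bar w_T)$ up to the precise numeric constants. The only delicate points are verifying the admissibility of the chosen $\lambda$ in \cref{bagel} and carefully tracking the constants inside $b$ so that they match the constants in the statement; both are routine.
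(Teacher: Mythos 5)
Your proposal is correct and takes essentially the same route as the paper's proof: Jensen's inequality to reduce to $\frac1T\sum_t \wh L(w_t)$, the regret bound of \cref{regret_sgd} at the comparator $w^*_\epsilon$, a range bound on the martingale differences via \cref{bound_smooth} and the per-iterate distance estimate from \cref{norm_sgd}, and the Bernstein-type inequality of \cref{bagel} with a self-bounding variance term that is absorbed into the left-hand side. The only cosmetic difference is that your martingale difference $Z_t=\wh L(w_t)-\loss(w_t\cdot z_{i_t})$ centers only the iterate's loss (the paper's $Z_t$ also folds in the comparator's centered loss, forcing $\lambda=\tfrac{1}{4b}$ instead of your $\tfrac{1}{2b}$), which if anything yields slightly cleaner constants than the paper's.
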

\begin{proof}
Let $w$. 
We define 
\[Z_t=\loss(w^*_\epsilon \cdot z_{i_t}) -\widehat{L}(w^*_\epsilon)-\loss(w_t\cdot z_{i_t})+\widehat{L}(w_t).\]
By the fact that $w_t,i_t$ is independent,  $\E\left(Z_t|Z_1...Z_{t-1}\right)=0$.
First,
\begin{align*}
    b&=\max_{t}|Z_t|\\&\leq \epsilon+\max_{t,i} \loss(w_t\cdot z_i)
    \tag{$\loss(w^*_\epsilon \cdot z_{i_t})\leq \epsilon$, definition of $\wh{L}$, nonnegativity}
    \\&\leq \epsilon +\max_{i,\|w\|\leq 2\|w^*_\epsilon\|+2\sqrt{\eta \epsilon T}}\loss(w\cdot z_i)
    \tag{\cref{norm_sgd}}
    \\&\leq \epsilon+ 2\max_i\loss(w^*_\epsilon\cdot z_i) + 16\beta \|w^*_\epsilon\|^2+16\eta \epsilon T \tag{\cref{bound_smooth}}
    \\&\leq 3\epsilon + 16\beta \|w^*_\epsilon\|^2+16\eta \epsilon T.
\end{align*}
We denote $b=3\epsilon + 16\beta \|w^*_\epsilon\|^2+16\eta \epsilon T$. Then, 
\begin{align*}
\max_{t}|Z_t|\leq 3\epsilon + 16\beta \|w^*_\epsilon\|^2+16\eta \epsilon T=b.
\end{align*}
Moreover, we denote $\E_t[\cdot]$ the expectation conditioned on the randomness of the algorithm before step $t$.
Then, 
\begin{align*}
    V&=\sum_{t=1}^T \E_t\left[\left(\loss(w_t\cdot z_{i_t}) -\widehat{L}(w_t)-\loss(w\cdot z_{i_t})+\widehat{L}(w)\right)^2\right]
    \\&\leq
    b\sum_{i=1}^T\E_t\left[|\loss(w_t\cdot z_{i_t}) -\widehat{L}(w_t)-\loss(w\cdot z_{i_{t}})+\widehat{L}(w)|\right]
    \\&\leq
    b\sum_{i=1}^T\E_t\left[\loss(w_t\cdot z_{i_t})|\right]
    + b\sum_{i=1}^T\E_t\left[\loss(w\cdot z_{i_t})|\right] +b\sum_{i=1}^T\E_t\left[\widehat{L}(w_t)|\right]
    +b\sum_{i=1}^T\E_t\left[\widehat{L}(w)|\right]
     \\&\leq 
    2b\sum_{i=1}^T\widehat{L}(w_t)
    +2bT\sum_{i=1}^T\widehat{L}(w)
\end{align*}
By \cref{bagel}, with probability $1-\delta$ by choosing $\lambda =\frac{1}{4b}$,
\begin{align*}
    \sum_{t=1}^T Z_t \leq \frac{1}{2}\sum_{i=1}^T\widehat{L}(w_t)+
    \frac{1}{2}T\widehat{L}(w) + \frac{4b}{T}\log\left(\frac{1}{\delta}\right).
\end{align*}
Then, with probability $1-\delta$,
\begin{align*}
    \frac{1}{T}\sum_{i=1}^T\widehat{L}(w_t)-\widehat{L}(w)&=\frac{1}{T}\sum_{t=1}^T\loss(w_t\cdot z_{i_t}) -\loss(w\cdot z_{i_t})+\frac{1}{T}\sum_{t=1}^T Z_t
    \\&\leq \frac{1}{T}\sum_{i=1}^T\loss(w_t\cdot z_{i_t}) -\frac{1}{T}\sum_{i=1}^T\loss(w\cdot z_{i_{t}})+ \frac{1}{2T}\sum_{t=1}^T\widehat{L}(w_t)+
    \frac{1}{2}\widehat{L}(w) + 4b\log\left(\frac{1}{\delta}\right).
\end{align*}
By organizing, 
\begin{align*}
    \frac{1}{2T}\sum_{i=1}^T\widehat{L}(w_t)
    \leq \frac{1}{T}\sum_{i=1}^T\loss(w_t\cdot z_{i_t}) -\frac{1}{T}\sum_{i=1}^T\loss(w\cdot z_{i_{t}})+
    \frac{3}{2}\widehat{L}(w) + \frac{4b}{T}\log\left(\frac{1}{\delta}\right).
\end{align*}
Moreover, with probability $1-\delta$, by \cref{regret_sgd} and Jensen inequality
\begin{align*}
\widehat{L}\left(\frac{1}{T}\sum_{i=1}^Tw_t\right)\leq \frac{1}{T}\sum_{i=1}^T\widehat{L}(w_t) &\leq\frac{2}{T}\sum_{i=1}^T\loss(w_t\cdot z_{i_t}) -\frac{2}{T}\sum_{i=1}^T\loss(w\cdot z_{i_{t}})+
    3\widehat{L}(w) + \frac{8b}{T}\log\left(\frac{1}{\delta}\right)
    \\&\leq \frac{\|w\|^2}{\eta T}+
    3\widehat{L}(w) + \frac{8b}{T}\log\left(\frac{1}{\delta}\right).
\end{align*}
Finally, for $w=w^*_\epsilon$,
\begin{align*}
    \widehat{L}\left(\bar{w}_T\right)&\leq \frac{\|w^*_\epsilon\|^2}{\eta T}+
    3\epsilon + \frac{8\left(3\epsilon + 16\beta \|w^*_\epsilon\|^2+16\eta \epsilon T\right)}{T}\log\left(\frac{1}{\delta}\right).
\end{align*}
\end{proof}

\section{Proofs of \cref{sec:upper}}
\label{app_upper}

We rely on the following standard lemma about smooth functions (proof can be found in, e.g., \citealp{nesterov2003introductory}, or in \citealp{srebrosmooth}).

\begin{lemma} \label{lem:2L_serbro}  
For a non-negative and $\beta$-smooth $f:\R^d \to \R$, it holds that $\|\nabla
f(w)\|^2 \leq 2\beta f(w)$ for all $w\in \R^d$.
\end{lemma}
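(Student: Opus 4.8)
The plan is to obtain this self-bounding inequality directly from $\beta$-smoothness combined with the non-negativity of $f$. I will work with the standard $d$-dimensional form of smoothness, namely the quadratic upper bound $f(v) \leq f(u) + \nabla f(u)\cdot(v-u) + \tfrac{\beta}{2}\|v-u\|^2$ for all $u,v \in \R^d$, which is the multivariate analogue of the one-dimensional definition given in the footnote and is the form actually needed here. The single idea driving the proof is to evaluate this upper bound at the point reached by one gradient step of length $1/\beta$ starting from $w$.

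Concretely, I would fix an arbitrary $w \in \R^d$ and set $v = w - \tfrac{1}{\beta}\nabla f(w)$. Substituting into the smoothness inequality with $u = w$ gives
\begin{align*}
    f(v)
    &\leq f(w) + \nabla f(w)\cdot\left(-\tfrac{1}{\beta}\nabla f(w)\right) + \frac{\beta}{2}\left\|\tfrac{1}{\beta}\nabla f(w)\right\|^2 \\
    &= f(w) - \frac{1}{2\beta}\left\|\nabla f(w)\right\|^2 ,
\end{align*}
where the coefficient $1/(2\beta)$ appears because $\tfrac{\beta}{2}\cdot\tfrac{1}{\beta^2} = \tfrac{1}{2\beta}$ cancels exactly half of the linear contribution $-\tfrac{1}{\beta}\|\nabla f(w)\|^2$.

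Finally, since $f$ is non-negative everywhere we have in particular $f(v) \geq 0$, so the displayed bound forces $0 \leq f(w) - \tfrac{1}{2\beta}\|\nabla f(w)\|^2$, which rearranges to the claimed inequality $\|\nabla f(w)\|^2 \leq 2\beta f(w)$; as $w$ was arbitrary, this holds for all $w \in \R^d$. I do not anticipate any genuine obstacle: the only point worth noting is that the step length $1/\beta$ is precisely the minimizer of the quadratic right-hand side over step sizes, so that it extracts the strongest possible inequality from non-negativity of $f$, and any other choice of step would yield a weaker constant.
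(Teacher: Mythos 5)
Your proof is correct and is exactly the standard argument: the paper itself gives no proof of this lemma, deferring to \citet{nesterov2003introductory} and \citet{srebrosmooth}, where the proof is precisely this one---apply the quadratic upper bound from $\beta$-smoothness at the point $w - \tfrac{1}{\beta}\nabla f(w)$ and invoke non-negativity. Nothing to add; your remark that the step size $1/\beta$ is the optimal choice is also accurate.
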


\begin{proof}[of \cref{opt_error}]
The proof follows the argument of \cite{schliserman22a}.
First, by $\beta$-smoothness, for every $t$ and $\eta \leq \ifrac{1}{\beta}$,
\begin{align*}
    \widehat{L}(w_{t+1})&\leq \widehat{L}(w_t) + \nabla \widehat{L}(w_t) \cdot (w_{t+1}-w_t) +\frac{\beta}{2} \|w_{t+1}-w_t\|^2
    \\&=
    \widehat{L}(w_t) - \eta  \|\nabla \widehat{L}(w_t)\|^2  + \frac{\eta^2 \beta}{2} \|\nabla \widehat{L}(w_t)\|^2
    \\&\leq
    \widehat{L}(w_t) - \frac{\eta}{2}\|\nabla \widehat{L}(w_t)\|^2
    \\&\leq
    \widehat{L}(w_t).
\end{align*}
Hence,
\begin{equation}
\label{GD_mono}
    \widehat{L}(w_T)\leq \frac{1}{T} \sum_{t=1}^{T}\widehat{L}(w_t).
\end{equation}
Moreover, from standard regret bounds for gradient updates, for any $w \in \R^d$,
\begin{align*}
    \frac{1}{T} \sum_{t=1}^{T} \brk{ \widehat{L}(w_t) - \widehat{L}(w) }
    &\leq
    \frac{\norm{w_1-w}^2}{2\eta T} + \frac{\eta}{2T} \sum_{t=1}^{T} \norm{\nabla \widehat{L}(w_t)}^2
    .
\end{align*}
By \cref{lem:2L_serbro},
\begin{align*}
    \frac{1}{T} \sum_{t=1}^{T} \brk{ \widehat{L}(w_t) - \widehat{L}(w) }
    &\leq
    \frac{\norm{w}^2}{2\eta T} + \frac{\eta \beta}{T} \sum_{t=1}^{T} \widehat{L}(w_t)
    .
\end{align*}
Using $\eta \leq \ifrac{1}{2\beta}$ gives
\begin{align*}
    \frac{1}{T} \sum_{t=1}^{T} \widehat{L}(w_t)
    &\leq
    \frac{\norm{w}^2}{\eta T} + 2\widehat{L}(w)
    .
\end{align*}
For $w=w^*_\epsilon$ we get by \cref{GD_mono},
 \begin{align*}
    \widehat{L}(w_T)\leq \frac{1}{T}\sum_{t=1}^{T} \widehat{L}(w_t)
    \leq 
    \frac{\norm{w^*_\epsilon}^2}{\eta T} + 2\widehat{L}(w^*_\epsilon)
    \leq
     \frac{\norm{w^*_\epsilon}^2}{\eta T} + 2\epsilon
    .
 \end{align*}
\end{proof}
\begin{lemma}
    \label{bound_smooth}
    Let $f:\R ^d\to \R$ be a $\beta$-smooth and nonnegative function. Then $f(x) \leq 2f(y)+ \beta \norm{x-y}^2$ for all $x,y \in \R^d$.
\end{lemma}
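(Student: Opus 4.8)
The plan is to expand $f(x)$ around $y$ using $\beta$-smoothness, control the resulting first-order term via the self-bounding gradient inequality already established in \cref{lem:2L_serbro}, and finally balance the two pieces with a weighted Young's inequality. The nonnegativity of $f$ enters precisely through \cref{lem:2L_serbro}, so it is essential that $f \geq 0$ everywhere.

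First I would write the smoothness bound applied to the pair $(x,y)$, namely
\begin{align*}
    f(x) \leq f(y) + \nabla f(y)\cdot(x-y) + \tfrac{\beta}{2}\norm{x-y}^2,
\end{align*}
and then estimate the inner product by Cauchy--Schwarz together with \cref{lem:2L_serbro}:
\begin{align*}
    \nabla f(y)\cdot(x-y) \leq \norm{\nabla f(y)}\,\norm{x-y} \leq \sqrt{2\beta f(y)}\,\norm{x-y}.
\end{align*}

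The only delicate point is the last step: I want to split $\sqrt{2\beta f(y)}\,\norm{x-y}$ so that the $f(y)$ contribution has coefficient exactly $1$ and the $\norm{x-y}^2$ contribution has coefficient exactly $\tfrac{\beta}{2}$, so that adding them to the leading $f(y)$ and $\tfrac{\beta}{2}\norm{x-y}^2$ terms yields the claimed $2f(y) + \beta\norm{x-y}^2$. This is achieved by the weighted inequality $ab \leq \tfrac{\lambda}{2}a^2 + \tfrac{1}{2\lambda}b^2$ with $a = \sqrt{f(y)}$, $b = \sqrt{2\beta}\,\norm{x-y}$, and the specific choice $\lambda = \sqrt{2/\beta}$, which makes both coefficients come out to exactly $1$ and $\tfrac{\beta}{2}$ respectively. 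I expect this choice of weight to be the one genuinely ``fiddly'' part of the argument; everything else is a direct substitution.

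Chaining these together gives $\nabla f(y)\cdot(x-y) \leq f(y) + \tfrac{\beta}{2}\norm{x-y}^2$, and plugging this back into the smoothness bound produces $f(x) \leq 2f(y) + \beta\norm{x-y}^2$, as desired. I note that a more casual application of Young's inequality (e.g. the symmetric $ab \leq \tfrac12 a^2 + \tfrac12 b^2$) would yield the slightly weaker constant $\tfrac32 f(y) + \tfrac{3\beta}{2}\norm{x-y}^2$, so the tuned weight is what delivers the exact stated constants.
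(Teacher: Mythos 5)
Your proof is the same argument as the paper's, just with the two estimates applied in the opposite order: the paper first bounds $\nabla f(y)\cdot(x-y) \leq \tfrac{1}{2\beta}\|\nabla f(y)\|^2 + \tfrac{\beta}{2}\|x-y\|^2$ (Cauchy--Schwarz plus weighted Young with weight $c=\beta$) and only then invokes \cref{lem:2L_serbro} to get $\tfrac{1}{2\beta}\|\nabla f(y)\|^2 \leq f(y)$, whereas you invoke \cref{lem:2L_serbro} first and apply Young afterwards. The reordering is immaterial; both yield $f(x) \leq 2f(y) + \beta\|x-y\|^2$.

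However, the one step you flagged as the ``genuinely fiddly'' part contains an arithmetic slip. With $a=\sqrt{f(y)}$, $b=\sqrt{2\beta}\,\|x-y\|$ and the inequality $ab \leq \tfrac{\lambda}{2}a^2 + \tfrac{1}{2\lambda}b^2$, the resulting coefficients are $\tfrac{\lambda}{2}$ on $f(y)$ and $\tfrac{\beta}{\lambda}$ on $\|x-y\|^2$; to make these equal $1$ and $\tfrac{\beta}{2}$ you need $\lambda=2$, not $\lambda=\sqrt{2/\beta}$. Your choice gives $\tfrac{1}{\sqrt{2\beta}}\,f(y) + \tfrac{\beta^{3/2}}{\sqrt{2}}\,\|x-y\|^2$, which matches the target constants only when $\beta=\tfrac12$. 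With $\lambda=2$ the intermediate bound you state, $\nabla f(y)\cdot(x-y) \leq f(y) + \tfrac{\beta}{2}\|x-y\|^2$, is correct and the rest of the argument (including your remark about the symmetric Young inequality giving $\tfrac32 f(y) + \tfrac{3\beta}{2}\|x-y\|^2$) goes through. One advantage of the paper's ordering is that the weight is then chosen on the quantity $\|\nabla f(y)\|\,\|x-y\|$ directly, where $c=\beta$ is the obvious choice and no square roots need to be juggled.
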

\begin{proof}
    For any $x,y \in \R^d$:
    \begin{align*}
    f(x)
    &\leq f(y)+ \nabla f(y) \cdot (x-y)+\frac{\beta}{2} \|x-y\|^2
    \tag{$\beta$-smoothness}\\
    &\leq f(y)+ \frac{1}{2\beta}\|\nabla f(y)\|^2+\frac{\beta}{2}\|x-y\|^2+\frac{\beta}{2} \|x-y\|^2
    \tag{$\forall c>0 ~:~ xy\leq \frac{1}{2c}x^2+\frac{c}{2} y^2$}
    \\
    &\leq 2f(y)+\beta\|x-y\|^2
    \tag{\cref{lem:2L_serbro}}
    .
\end{align*}
\end{proof}
\section{Proof of \cref{sec:lower}}
\label{sec:proofs-lower}

\subsection{Proof of \cref{lower_bigt}}
\label{app_lower_bigt}

\begin{lemma}
\label{bigt_char}
Let $\classfunc$ be a tail function. Let $\loss(x)$ be the following function,
\[\loss(x)=
\begin{cases}
      \classfunc(x) & \text{if $x\geq 0$;} \\
      \classfunc(0)+x\classfunc'(0)+\frac{\beta}{2}x^2 & \text{if $x<0$.}
    \end{cases}
\]
Then, $\loss\in \classtail$.
\end{lemma}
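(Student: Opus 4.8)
The plan is to verify each defining property of the class $\classtail$ for the piecewise function $\loss$, after first observing that its two branches are glued together smoothly. First I would check that $\loss$ is $C^1$ at the junction $x=0$: the two definitions agree in value there (both give $\classfunc(0)$), and the one-sided derivatives also agree, since the derivative of the quadratic branch at $0$ is $\classfunc'(0)$, matching $\loss'(0^+)=\classfunc'(0)$. Thus $\loss$ is continuously differentiable on all of $\R$, with $\loss'(x)=\classfunc'(0)+\beta x$ for $x<0$ and $\loss'(x)=\classfunc'(x)$ for $x\geq 0$. Throughout I will use that, since $\classfunc$ is strictly decreasing with $|\classfunc'(0)|\geq\tfrac12$, we have $\classfunc'(0)\leq-\tfrac12<0$.

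Next I would dispatch the three ``easy'' properties. For monotonicity, on $x\geq0$ it is inherited from $\classfunc$, and on $x<0$ we have $\loss'(x)=\classfunc'(0)+\beta x<0$ since both summands are negative. Nonnegativity then follows because $\loss$ is decreasing and $\lim_{x\to\infty}\loss(x)=\lim_{x\to\infty}\classfunc(x)=0$, so $\loss(x)\geq 0$ for every $x$ (alternatively, on $x<0$ the three terms $\classfunc(0)$, $x\classfunc'(0)$, and $\tfrac{\beta}{2}x^2$ are each nonnegative). For convexity, each branch is convex ($\classfunc$ by assumption; the quadratic because $\beta>0$), and since $\loss$ is $C^1$ with $\loss'$ non-decreasing on each side and continuous at $0$, the derivative $\loss'$ is non-decreasing on all of $\R$, which yields convexity. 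Finally, the required domination $\loss(u)\leq\classfunc(u)$ for $u\geq0$ holds with equality by construction.

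The main obstacle is $\beta$-smoothness, because this inequality must hold jointly across the junction and not merely on each branch. I would reduce it to showing that $\loss'$ is $\beta$-Lipschitz on $\R$: once this is established, the smoothness inequality $\loss(v)\leq\loss(u)+\loss'(u)(v-u)+\tfrac{\beta}{2}(v-u)^2$ follows for all $u,v$ from the fundamental theorem of calculus, by writing $\loss(v)-\loss(u)-\loss'(u)(v-u)=\int_u^v(\loss'(t)-\loss'(u))\,dt$ and bounding the integrand by $\beta|t-u|$. On the branch $x<0$, $\loss'$ has constant slope $\beta$ and is therefore $\beta$-Lipschitz; on $x\geq0$, $\loss'=\classfunc'$ is $\beta$-Lipschitz because $\classfunc$ is $\beta$-smooth. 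The one remaining case is $u<0<v$, which I would handle by passing through the junction with the triangle inequality: $|\loss'(u)-\loss'(v)|\leq|\loss'(u)-\loss'(0)|+|\loss'(0)-\loss'(v)|\leq\beta|u|+\beta|v|=\beta|u-v|$, where the last equality uses that $u$ and $v$ have opposite signs. This closes the gap across $0$ and completes the verification that $\loss\in\classtail$.
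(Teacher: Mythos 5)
Your proof is correct and follows essentially the same route as the paper's: a branch-by-branch verification of nonnegativity, monotonicity, convexity, and smoothness, with the junction at $x=0$ handled by comparing one-sided derivatives. The only cosmetic difference is that you phrase $\beta$-smoothness as $\beta$-Lipschitzness of $\loss'$ (recovering the smoothness inequality via the fundamental theorem of calculus), whereas the paper directly verifies the derivative-increment bound $\loss'(y)-\loss'(x)\leq\beta(y-x)$; these are equivalent for convex functions, and your cross-junction triangle-inequality step is in substance the same computation as the paper's.
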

\begin{proof}
First, it is easy to verify that $\loss$ is continuously differentiable.
Second, $\loss$ is non negative: for $x\geq0$ by the non negativity of $\classfunc$ and for $x<0$ by the fact that $\classfunc'(0)\leq 0$.
Moreover, $\loss$ is convex. We need to prove that every $x<y$, $\loss'(x)\leq \loss'(y)$ For $x,y<0$, we get it by the convexity of $\classfunc$.  For $x,y>0$, we get it by the fact $\loss$ there is a sum of convex function and linear function. For $x<0<y$, by the convexity of $\classfunc$,
\begin{align*}
    \loss'(x)=\classfunc'(0)+\beta x\leq \classfunc'(0)\leq \classfunc'(y).
\end{align*}
In addition, $\loss$ is $\beta$-smooth. We need to prove that every $x<y$, $\loss'(y)-\loss'(x)\leq \beta(y-x)$ For $x,y\geq0$, we get it by the smoothness of $\classfunc$.  For $x,y\leq0$, we get it by the fact that $\loss$ is a sum of $\beta$-smooth function and a linear function. For $x\leq0\leq y$, by the smoothness of $\classfunc$,
\begin{align*}
    \loss'(y)-\loss'(x)=\classfunc'(y)-\classfunc'(0)-\beta x\leq \beta(y-x).
\end{align*}
Finally, $\loss$ is strictly monotonically decreasing.
We need to prove that every $x<y$, $\loss(y)>\loss(x)$. For $x,y>0$, we get it by the monotonicity of $\classfunc$.  For $x<y<0$, 
\begin{align*}
    \loss(y)=\classfunc(0)+\classfunc'(0)y+\frac{\beta}{2}y^2\leq \classfunc(0)+\classfunc'(0)x+\frac{\beta}{2}x^2=\loss(x).
\end{align*}
For $x<0<y$, 
\begin{align*}
    \loss(y)=\classfunc(y)\leq \classfunc(0) \leq \classfunc(0)+\classfunc'(0)x+\frac{\beta}{2}x^2=\loss(x)
    .
\end{align*}
\end{proof}

\begin{lemma}
\label{totalprob}
Let $S \sim \D^n$ be a sample of size $n$, and let $z' \sim \D$ be a validation example. Moreover, Assume $n\geq 35$ and let $\prob_2$  be the fraction of $z_2$ in $S$.
We define the following event, 
\[A=\{z_3\notin S\} \cap\{z'=z_3\} \cap\{\delta_2\in [ \tfrac{1}{32}, \tfrac{1}{8} ]\}.\]
Then, \begin{align*}
    \Pr(A)\geq \frac{1}{120en}
\end{align*}
\end{lemma}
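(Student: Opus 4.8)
The plan is to exploit the independence of the validation example $z'$ from the training sample $S$, which lets me factor $\Pr(A)$ and then analyze the two $S$-measurable events separately. Writing $B = \{z_3 \notin S\}$ and $C = \{\delta_2 \in [\tfrac{1}{32}, \tfrac{1}{8}]\}$, both depending only on $S$, and noting that $\{z' = z_3\}$ depends only on the independent draw $z'$ (with $\Pr(z'=z_3) = \tfrac1n$), I get
\[
\Pr(A) = \Pr(z' = z_3)\,\Pr(B \cap C) = \frac{1}{n}\,\Pr(B \cap C),
\]
so it suffices to prove $\Pr(B \cap C) \geq \tfrac{1}{120e}$.

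Next I would write $\Pr(B\cap C) = \Pr(B)\,\Pr(C \mid B)$ and make the key structural observation that conditioning on $B$ (no sample equals $z_3$) turns the samples into $n$ i.i.d.\ draws from the distribution restricted to $\{z_1, z_2\}$, since $B = \bigcap_i\{z_i \neq z_3\}$ factorizes across coordinates. The conditional probability of drawing $z_2$ is exactly $\tfrac{(5/64)(1-1/n)}{1-1/n} = \tfrac{5}{64}$, the $1/n$ factors cancelling, so conditioned on $B$ the count $n\delta_2$ is exactly $\mathrm{Binomial}(n, \tfrac{5}{64})$ with $\E[\delta_2 \mid B] = \tfrac{5}{64}$ and $\Var(\delta_2 \mid B) = \tfrac{(5/64)(59/64)}{n}$. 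For the first factor I would use the elementary bound $(1-1/n)^n = (1-1/n)(1-1/n)^{n-1} \geq \tfrac{1}{e}(1-1/n) \geq \tfrac{34}{35e}$, valid because $(1-1/n)^{n-1}\ge 1/e$ and $n \geq 35$.

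For $\Pr(C\mid B)$ the crucial point is that the target window is symmetric about the conditional mean: $[\tfrac{1}{32},\tfrac{1}{8}] = [\tfrac{2}{64}, \tfrac{8}{64}]$ has midpoint $\tfrac{5}{64}$ and half-width $\tfrac{3}{64}$, so $C = \{|\delta_2 - \tfrac{5}{64}| \leq \tfrac{3}{64}\}$. Chebyshev's inequality then gives
\[
\Pr(C \mid B) \geq 1 - \frac{\Var(\delta_2 \mid B)}{(3/64)^2} = 1 - \frac{295}{9n} \geq 1 - \frac{295}{9 \cdot 35} = \frac{4}{63},
\]
where the last step uses that the bound is increasing in $n$ and $n \geq 35$. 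Multiplying the three estimates yields $\Pr(A) \geq \tfrac{1}{n}\cdot\tfrac{34}{35e}\cdot\tfrac{4}{63} = \tfrac{136}{2205 e n} \geq \tfrac{1}{120en}$, as $136\cdot 120 \geq 2205$.

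The main obstacle is the Chebyshev step: the bound $1 - \tfrac{295}{9n}$ is vacuous unless $n > 295/9 \approx 32.8$, which is precisely why the hypothesis $n \geq 35$ appears, and this is the binding case since the estimate only improves as $n$ grows. One must also verify carefully that the conditional law of $\delta_2$ given $B$ is genuinely the clean $\mathrm{Binomial}(n, 5/64)$ — this is what places the mean at the center of the window and allows a two-sided deviation of $\tfrac{3}{64}$, rather than forcing a cruder one-sided argument that would lose the symmetric structure.
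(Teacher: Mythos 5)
Your proof is correct and follows essentially the same route as the paper: factor out $\Pr(z'=z_3)=\tfrac1n$ by independence, observe that conditioning on $\{z_3\notin S\}$ makes $n\delta_2$ a Binomial$(n,\tfrac{5}{64})$ variable, and apply Chebyshev on the symmetric window $[\tfrac{2}{64},\tfrac{8}{64}]$ around the conditional mean $\tfrac{5}{64}$. The only differences are cosmetic: you condition on $\{z_3\notin S\}$ alone rather than on the joint event $A_1$ (equivalent, since $z'$ is independent of $S$) and you retain slightly sharper constants ($\tfrac{34}{35e}\cdot\tfrac{4}{63}$ versus the paper's $\tfrac{1}{2e}\cdot\tfrac{1}{60}$), both of which clear the stated $\tfrac{1}{120en}$ threshold.
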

\begin{proof}
The proof follows directly by \cref{A_1_nonlip} and \cref{A_2_nonlip}.
We define the following events,
\[A_1=\{z_3\notin S\} \cap\{z'=z_3\},A_2=\{\delta_2\in [ \tfrac{1}{32}, \tfrac{1}{8} ]\}.\]
By \cref{A_1_nonlip}.
\begin{align*}
    \Pr(A_1) \geq \frac{1}{2en}.
\end{align*}
By $\cref{A_2_nonlip}$,
\begin{align*}
     \Pr(A_2|A_1)\geq \frac{1}{60}.
\end{align*}
Then, combining both results, 
\begin{align*}
    \Pr(A)\geq\Pr(A_1) \Pr(A_2 \mid A_1)\geq\frac{1}{120en}
    .
\end{align*}
\end{proof}

\begin{lemma}
\label{A_1_nonlip}
Let $S \sim \D^n$ be a sample of size $n$, and let $z' \sim \D$ be a validation example.
 Then,
\begin{align*}
    \Pr(A_1) 
    = \Pr(z_3\notin S \wedge z'=z_3) 
    \geq \frac{1}{2en}.
\end{align*}
\end{lemma}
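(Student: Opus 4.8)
The plan is to exploit the independence between the training sample $S$ and the validation point $z'$, and then reduce everything to a single elementary inequality about $(1-\tfrac1n)^n$. Since $z'$ is drawn independently of $S$, the event $\{z_3\notin S\}$ (which depends only on $S$) is independent of $\{z'=z_3\}$ (which depends only on $z'$), so I would first factor
\[
    \Pr(z_3\notin S \wedge z'=z_3)
    = \Pr(z_3\notin S)\cdot\Pr(z'=z_3).
\]

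Next I would evaluate each factor directly from the definition of $\D$ used in \cref{lower_bigt}. By construction $\Pr(z'=z_3)=\tfrac1n$. For the first factor, the $n$ examples in $S$ are i.i.d.\ and each avoids $z_3$ with probability $1-\tfrac1n$, so $\Pr(z_3\notin S)=(1-\tfrac1n)^n$. Combining the two gives $\Pr(A_1)=(1-\tfrac1n)^n\cdot\tfrac1n$, and it remains only to lower bound $(1-\tfrac1n)^n$.

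The one quantitative step is to show $(1-\tfrac1n)^n\geq\tfrac{1}{2e}$. I would split $(1-\tfrac1n)^n=(1-\tfrac1n)^{n-1}(1-\tfrac1n)$ and invoke the standard fact that $(1-\tfrac1n)^{n-1}\geq\tfrac1e$ for every $n\geq 1$ (equivalently, $(1+\tfrac{1}{n-1})^{n-1}$ increases to $e$), together with $1-\tfrac1n\geq\tfrac12$ for $n\geq 2$. This yields $(1-\tfrac1n)^n\geq\tfrac{1}{2e}$ and hence $\Pr(A_1)\geq\tfrac{1}{2en}$, exactly as claimed.

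There is no real obstacle here: the proof is essentially a factorization followed by the elementary bound on $(1-\tfrac1n)^n$. I would only flag that the split above is deliberately chosen to produce precisely the constant $\tfrac{1}{2e}$ (rather than a weaker one), and that the argument presumes $n\geq 2$, which is guaranteed since the surrounding results assume $n\geq 35$.
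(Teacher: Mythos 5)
Your proposal is correct and follows essentially the same route as the paper: factor the joint probability via independence of $S$ and $z'$, compute $\Pr(z'=z_3)=\tfrac1n$ and $\Pr(z_3\notin S)=(1-\tfrac1n)^n$, and lower bound the latter by $\tfrac{1}{2e}$ using $(1-\tfrac1n)^{n-1}\geq\tfrac1e$ together with $1-\tfrac1n\geq\tfrac12$ (the paper writes this same estimate as $(1-\tfrac1n)^n\geq\tfrac1e(1-\tfrac1n)\geq\tfrac{1}{2e}$). The only cosmetic difference is that the paper phrases the factorization through the conditional probability $\Pr(z'\notin S\mid z'=z_3)$ before invoking independence, whereas you invoke independence directly.
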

\begin{proof}
First, we know that,
\begin{align*}
    \Pr(z_3\notin S \wedge z'=z_3) = \Pr(z'=z_3) \cdot \Pr(z'\notin S \mid z'=z_3).
\end{align*}
and,
\begin{align*}
     \Pr(z'=z_3)= \frac{1}{n}.
\end{align*}
Moreover, 
\begin{align*}
    \Pr(z'\notin S \mid z'=z_3)
    = P(z_3\notin S)
    = (1-\frac{1}{n})^n
    \geq \frac{1}{e}(1-\frac{1}{n})
    \geq \frac{1}{2e}.
\end{align*}
Combining everything together proves our claim.
\end{proof}

\begin{lemma} \label{A_2_nonlip}
Assume $n\geq 35$ and let $\prob_2$  be the fraction of $z_2$ in $S$. Then
\begin{align*}
    \Pr(A_2 \mid A_1)
    =\Pr\brk!{ \delta_2\in [ \tfrac{1}{32}, \tfrac{1}{8}] \mid A_1 } \geq \frac{1}{60}.
\end{align*}
\end{lemma}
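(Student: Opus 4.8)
The plan is to reduce the conditional probability to a clean binomial tail estimate and then apply Chebyshev's inequality. Since the validation example $z'$ is drawn independently of the sample $S$, and the event $A_2 = \{\delta_2 \in [\tfrac{1}{32},\tfrac{1}{8}]\}$ depends only on $S$, conditioning on the full event $A_1 = \{z_3 \notin S\} \cap \{z' = z_3\}$ is equivalent to conditioning on $\{z_3 \notin S\}$ alone. Thus I would first argue
\[
    \Pr(A_2 \mid A_1) = \Pr\brk!{ \delta_2 \in [\tfrac{1}{32},\tfrac{1}{8}] \mid z_3 \notin S }.
\]

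Next, I would identify the conditional law of $S$ given $\{z_3 \notin S\}$. Because $\{z_3 \notin S\} = \bigcap_{i=1}^n \{z_i \neq z_3\}$ factors over the independent coordinates of $S$, conditioning on it keeps the $n$ examples i.i.d., each now drawn from the two-point distribution on $\{z_1,z_2\}$ obtained by deleting the $\tfrac1n$ mass on $z_3$ and renormalizing. A one-line computation shows the renormalized probabilities are exactly $\tfrac{59}{64}$ for $z_1$ and $\tfrac{5}{64}$ for $z_2$ (the common factor $1-\tfrac1n$ cancels). Consequently $N_2 := n\delta_2$ is a $\mathrm{Binomial}(n, \tfrac{5}{64})$ variable, with mean $\mu = \tfrac{5n}{64}$ and variance $\sigma^2 = n\cdot\tfrac{5}{64}\cdot\tfrac{59}{64} = \tfrac{295n}{4096}$.

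The observation that makes the estimate clean is that the target interval is \emph{symmetric} about the mean: writing $\tfrac{1}{32} = \tfrac{2}{64}$ and $\tfrac18 = \tfrac{8}{64}$, the interval $[\tfrac{n}{32},\tfrac{n}{8}]$ equals $[\mu - \tfrac{3n}{64},\, \mu + \tfrac{3n}{64}]$, since $\tfrac{5}{64}$ is precisely the midpoint of $\tfrac{2}{64}$ and $\tfrac{8}{64}$. I would then apply Chebyshev's inequality,
\[
    \Pr\brk!{ |N_2 - \mu| \geq \tfrac{3n}{64} } \leq \frac{\sigma^2}{(3n/64)^2} = \frac{295}{9n},
\]
so that $\Pr(\delta_2 \in [\tfrac{1}{32},\tfrac18] \mid z_3 \notin S) \geq 1 - \tfrac{295}{9n}$. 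Substituting $n \geq 35$ yields $1 - \tfrac{295}{9n} \geq 1 - \tfrac{295}{315} = \tfrac{4}{63} \geq \tfrac{1}{60}$, which is the claim. The only minor care needed is that Chebyshev delivers the open interval $\{|N_2-\mu| < \tfrac{3n}{64}\}$, which is contained in the desired closed interval, so the lower bound is unaffected.

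I do not anticipate any genuine obstacle: the whole content is the reduction to a symmetric binomial tail, after which Chebyshev suffices precisely because the required constant $\tfrac{1}{60}$ is generous. The two points to get right are the conditioning step—verifying that removing $z_3$ leaves i.i.d.\ draws from the correctly renormalized two-point distribution—and the arithmetic confirming that the mean fraction $\tfrac{5}{64}$ sits exactly at the center of $[\tfrac{1}{32},\tfrac18]$, which is what permits a symmetric (rather than one-sided) deviation bound and hence the crude variance estimate.
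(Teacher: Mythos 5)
Your proposal is correct and follows essentially the same route as the paper: reduce to the conditional distribution given $\{z_3\notin S\}$, observe the renormalized probability of $z_2$ is exactly $\tfrac{5}{64}$ (the midpoint of $[\tfrac{1}{32},\tfrac18]$), and apply Chebyshev's inequality with the same variance $\tfrac{5\cdot 59}{64^2 n}$ and the same arithmetic at $n\geq 35$. If anything, your write-up is slightly more careful than the paper's, since you explicitly justify dropping the $\{z'=z_3\}$ part of $A_1$ and verify that conditioning preserves the i.i.d.\ structure, both of which the paper's proof uses implicitly.
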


\begin{proof}
Let $p'_i = \Pr(z_i=z_2 \mid A_1)$.
Note that by the fact that $i\neq j$, $z_i, z_j$ are indepndent.
Then, for every $i\neq j$, $p'_i=p'_j$.
Then, by the fact that every example is independent,
\begin{align*}
    p'_i
    &=\Pr(z_i=z_2 \mid z_3\notin S)
    \\
    &=\Pr(z_i=z_2 \mid z_i\neq z_3)
    \\
    &=\frac{\Pr(z_i=z_2)}{\Pr(z_i\neq z_3)}
    \\
&=\frac{1}{1-\frac{1}{n}} \Pr(z_i=z_2)
\\&=\frac{5}{64}.
\end{align*}
Then,
\begin{align*}  \E[\delta_2 \mid A_1]=\frac{1}{n}\sum_{i=1}^n\Pr(z_i=z_2 \mid A_1)=
    \frac{1}{n}\sum_{i=1}^np'_i=\frac{5}{64}
    ,
\end{align*}
and,
\begin{align*}
    \operatorname{Var}(\delta_2 \mid A_1)
    = \operatorname{Var}\brk3{ \frac{1}{n}\sum_{i=1}^n 1_{\{z_i=z_2\}} \mid A_1 }
    =\frac{1}{n^2}\sum_{i=1}^n \operatorname{Var}(1_{\{z_i=z_2\}} \mid A_1)
    =\frac{5\cdot 59}{64^2n} 
    .
\end{align*}
Finally, by Chebyshev's inequality, for $n\geq 35$,
\begin{align*}
    \Pr(A_2 \mid A_1)
    &=\Pr\brk!{ \delta_2\in [ \tfrac{1}{32}, \tfrac{1}{8}] \mid A_1}
    \\
    &=\Pr\brk!{ \abs!{ \delta_2 - \tfrac{5}{64}} \leq \tfrac{3}{64} \mid A_1}
    \\
    &= 1-\Pr\brk!{ \abs!{\delta_2 - \tfrac{5}{64} } \geq \tfrac{3}{64} \mid A_1}
    \\
    &\geq 1-\frac{64^2}{9} \operatorname{Var}(\delta_2 \mid A_1)
    \\
    &= 1-\frac{5\cdot 59}{9n}
    \\
    &\geq 1-\frac{5\cdot 59}{315}
    \\
    &\geq \frac{1}{60}
    .
    \end{align*}
\end{proof}
\subsection{Proof of \cref{lower_smallt}}
\label{app_lower_smallt}

\begin{lemma} \label{smallt_char}
Let $\classfunc$ be a tail function.
Let $\loss(x)$ be the following function,
\[\loss(x)=
\begin{cases}
      \classfunc(x) & \text{if $x\geq 0$;}\\
     \classfunc(0)+\classfunc'(0)x & \text{if $x<0$.}
\end{cases}
\]
Then, $\loss\in \classtail$.
\end{lemma}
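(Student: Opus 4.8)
The plan is to verify directly that $\loss$ satisfies each defining property of the class $\classtail$, following the same template already used for \cref{bigt_char}; the only change here is that the quadratic extension on $x<0$ is replaced by the linear piece $\classfunc(0)+\classfunc'(0)x$, which actually simplifies several of the checks. First I would confirm that $\loss$ is continuously differentiable: the two pieces agree at the origin (both equal $\classfunc(0)$) and so do their one-sided derivatives (both equal $\classfunc'(0)$), so $\loss$ is $C^1$ and it suffices to reason about $\loss'$ on each side of zero. Nonnegativity is then immediate: for $x\ge 0$ we have $\loss=\classfunc\ge 0$, while for $x<0$ the facts $\classfunc'(0)\le 0$ and $x<0$ give $\classfunc'(0)x\ge 0$, so $\loss(x)\ge\classfunc(0)\ge\tfrac12$.

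For convexity and $\beta$-smoothness I would establish, for every $x<y$, the two inequalities $\loss'(x)\le\loss'(y)$ and $\loss'(y)-\loss'(x)\le\beta(y-x)$, splitting into the cases $0\le x<y$, $x<y\le 0$, and $x<0<y$. The first case reduces to convexity and smoothness of $\classfunc$; in the second case $\loss'$ is the constant $\classfunc'(0)$ on the negative axis, so both inequalities hold trivially. The seam case $x<0<y$ is the only one needing care: there $\loss'(x)=\classfunc'(0)\le\classfunc'(y)$ by monotonicity of $\classfunc'$ (i.e.\ convexity of $\classfunc$), and $\loss'(y)-\loss'(x)=\classfunc'(y)-\classfunc'(0)\le\beta y\le\beta(y-x)$ using $\beta$-smoothness of $\classfunc$ together with $x<0$. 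Strict monotonic decrease is verified by the same case split, using that the linear piece has slope $\classfunc'(0)<0$ (since $|\classfunc'(0)|\ge\tfrac12$) and that $\classfunc$ is strictly decreasing, so that for $x<0<y$ one has $\loss(y)=\classfunc(y)<\classfunc(0)\le\loss(x)$.

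Finally, the domination requirement $\loss(u)\le\classfunc(u)$ for $u\ge 0$ holds with equality by construction, and since $0\le\loss(u)\le\classfunc(u)\to 0$ the loss also decays to zero. Hence $\loss\in\classtail$. The only genuinely delicate point throughout is the gluing at the origin: each property must be checked across the seam $x<0<y$, and the linear extension was chosen precisely so that $\loss'$ is continuous there and stays equal to $\classfunc'(0)$ (its most negative value) on the whole negative axis, which is exactly what makes every seam-crossing inequality go through.
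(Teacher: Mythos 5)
Your proof is correct and follows essentially the same route as the paper's: a direct case-by-case verification (on each side of zero and across the seam $x<0<y$) of continuity of the derivative, nonnegativity, convexity, $\beta$-smoothness, and strict monotone decrease. If anything, yours is slightly more complete, since you also explicitly note the domination requirement $\loss(u)\le\classfunc(u)$ for $u\ge 0$ (which holds with equality) and invoke $|\classfunc'(0)|\ge\tfrac12$ to get strictness of the decrease on the linear piece, points the paper's proof leaves implicit.
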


\begin{proof}
First, it is easy to verify that $\loss$ is continuously differentiable.
Second, $\loss$ is non negative: for $x\geq 0$ by the non negativity of $\classfunc$ and for $x< 0$ by the fact that $\classfunc'(0)\leq 0$.
Moreover, $\loss$ is convex. We need to prove that every $x<y$, $\loss'(x)\leq \loss'(y)$ For $x,y<0$, we get it by the convexity of $\classfunc$.  For $x,y>0$, we get it by the linearity of $\loss$. For $x<0<y$, by the convexity of $\classfunc$,
\begin{align*}
    \loss'(x)= \classfunc'(0)\leq \classfunc'(y)=\loss'(y).
\end{align*}
In addition, $\loss$ is $\beta$-smooth. We need to prove that every $x<y$, $\loss'(y)-\loss'(x)\leq \beta(y-x)$ For $x,y\geq0$, we get it by the smoothness of $\classfunc$.  For $x,y\leq0$, we get it by the linearity of $\loss$. For $x\leq 0\leq y$, by the smoothness of $\classfunc$,
\begin{align*}
    \loss'(y)-\loss'(x)=\classfunc'(y)-\classfunc'(0)\leq \beta y\leq \beta (y-x).
\end{align*}
Finally, $\loss$ is strictly monotonically decreasing.
We need to prove that every $x<y$, $\loss(y)>\loss(x)$. For $x,y>0$, we get it by the monotonicity of $\classfunc$.  For $x<y<0$, 
\begin{align*}
    \loss(y)=\classfunc(0)+\classfunc'(0)y\leq \classfunc(0)+\classfunc'(0)x=\loss(x).
\end{align*}
For $x<0<y$, 
\begin{align*}
    \loss(y)=\classfunc(y)\leq \classfunc(0) \leq \classfunc(0)+\classfunc'(0)x=\loss(x)
    .
\end{align*}
\end{proof}

\end{document}